\title{Near-Optimal Regret for Efficient Stochastic Combinatorial Semi-Bandits}
\author {
    Zichun Ye\textsuperscript{\rm 1},
    Runqi Wang\textsuperscript{\rm 1},
    Xutong Liu\textsuperscript{\rm 2},
    Shuai Li \textsuperscript{\rm 1}
}
\begin{document}

\maketitle

\begin{abstract}
The combinatorial multi-armed bandit (CMAB) is a cornerstone of sequential decision-making framework, dominated by two algorithmic families: UCB-based and adversarial methods such as follow the regularized leader (FTRL) and online mirror descent (OMD). However, prominent UCB-based approaches like CUCB suffer from additional regret factor $\log T$ that is detrimental over long horizons, while adversarial methods such as EXP3.M and HYBRID impose significant computational overhead. To resolve this trade-off, we introduce the Combinatorial Minimax Optimal Strategy in the Stochastic setting (CMOSS). CMOSS is a computationally efficient algorithm that achieves an instance-independent regret of $O\big( (\log k)\sqrt{kmT}\big )$ when $k\leq \frac{m}{2}$ and $O\big((m-k)\sqrt{\log k\log(m-k)T}\big )$ when $k>\frac{m}{2}$ under semi-bandit feedback, where $m$ is the number of arms and $k$ is the maximum cardinality of a feasible action. Crucially, this result eliminates the dependency on $\log T$ and matches the established lower bounds of $\Omega\big(\sqrt{kmT}\big)$ when $k\leq \frac{m}{2}$ and $\Omega\big((m-k)\sqrt{\log (\frac{m}{m-k}) T}\big)$ when $k>\frac{m}{2}$ up to logarithmic terms of $k$ and $m$. We then extend our analysis to show that CMOSS is also applicable to cascading feedback.
Experiments on synthetic and real-world datasets validate that CMOSS consistently outperforms benchmark algorithms in both regret and runtime efficiency.
\end{abstract}


\section{Introduction} \label{sec:intro}

The stochastic multi-armed bandit (MAB) is a fundamental $T$-round sequential decision-making game that has been widely studied in the past two decades \cite{AB09,ACFS95,ACFS02,Rob52}.
In this game, one player faces $m$ arms. In each round $t \in [T]$, the player chooses an arm $A_t$ among the $m$ arms and gets a reward drawn from a predetermined but unknown reward distribution with mean $\mu_{A_t} \in [0,1]$. The arm with the largest reward mean $\mu_*$ is called the best arm. The total expected regret is defined as $\E{R(T)} = \E{\sum_{t=1}^T \tp{\mu_* - \mu_{A_t}}},$ where the expectation is over the randomness from the player's strategy and the random reward. And the objective is to minimize the minimax regret, namely the regret of the best algorithm against the worst input.

The Upper Confidence Bounds (UCB) algorithm \cite{ACBF02} is a well-known approach for addressing the stochastic MAB problem, achieving an expected regret of $O\tp{\sqrt{mT\log T}}$. Building on this foundation, its variant, the Minimax Optimal Strategy in the Stochastic case (MOSS) algorithm \cite{AB09,DP16,MG17}, improves the regret bound to $O\tp{\sqrt{mT}}$ by refining the upper confidence bounds and employing a distinct regret analysis. These results align with the minimax lower bound of $\Omega\tp{\sqrt{mT}}$ established in \cite{ACFS02}, demonstrating their optimality in the stochastic setting.
In parallel, the EXP3 algorithm \cite{ACFS95,LW94} provides a solution for adversarial environments. EXP3 is a classic implementation of the follow the regularized leader (FTRL) framework using a negative entropy regularizer \cite{SS12}.
This approach, also known through the dual perspective of online mirror descent (OMD), achieves a regret of $O\left(\sqrt{mT \log m}\right)$ in both adversarial and stochastic regimes, notably avoiding dependence on $\log T$.

In many real-world applications, however, decision-making problems often extend beyond the simple MAB framework to involve \textit{combinatorial actions} among multiple arms. For instance, in online advertising \cite{KSWA15}, an advertiser could select up to $k$ web pages to display ads, constrained by a budget. Each user visits a subset of pages, and the advertiser must learn the unknown click-through probabilities to maximize clicks. Similarly, in viral marketing \cite{GKJ12,TZH17}, a marketer selects seed nodes in a social network to initiate information cascades, learning influence probabilities to maximize the reach of campaigns. 
These scenarios highlight the need for algorithms that efficiently address the combinatorial MAB (CMAB) problem. 

While a number of works \cite{CTPL15,KWAS15,CHL16,WC17,LZWJ22,LWZ24,LDWH25} have focused on the CMAB problem, existing algorithms either suffer from additional $\log T$ terms in their regret bounds or incur high computational costs in practice. Among them, the Combinatorial UCB (CUCB) algorithm \cite{CWY13,KWAS15,WC17,LZWJ22} is one of the most prominent and widely adopted approaches. The CUCB algorithm extends the classic UCB framework by selecting the action (super arm) from a feasible set that solves a combinatorial optimization problem defined over the upper confidence bounds of the base arms. Kveton et al. \cite{KWAS15} establish an upper regret bound of $O\left(\sqrt{kmT \log T}\right)$ for CUCB, which matches the lower bound of $\Omega\left(\sqrt{kmT}\right)$ only up to a $\sqrt{\log T}$ factor when $k\leq \frac{m}{2}$. 

A separate line of work \cite{CBL12,CTPL15,UNK10,ZLW19} focuses on adapting algorithms derived from FTRL and OMD frameworks that could be applied to CMAB setting. The best-known result in this line \cite{ZLW19} achieves a regret of $O\left(\sqrt{kmT} \right)$ when $k \leq \frac{m}{2}$ and $O\left((m-k)\sqrt{\log\tp{\frac{m}{m-k}} T} \right)$ when $k> \frac{m}{2}$, successfully removing the $\log T$ term. However, these adversarially designed algorithms are often computationally expensive, 
limiting their practical utility. This gap—between computational efficiency and regret optimality—has remained unresolved for over a decade, highlighting the critical need for ``MOSS-like'' algorithms in CMAB that achieve minimax-optimal regret without $\log T$ dependence.




\subsection{Our results}

To address the above limitation, we study the stochastic combinatorial semi-bandit problem and propose the CMOSS algorithm. Our contribution is to show that CMOSS is not only computationally efficient but also has a regret of $O\big( (\log k)\sqrt{kmT}\big )$ when $k\leq \frac{m}{2}$ and $O\big((m-k)\sqrt{\log k\log(m-k)T}\big )$ when $k>\frac{m}{2}$. We also extend the setting to cascading feedback and compare our result with CUCB, since the other three algorithms can not be applied to the scenarios where arms in $A_t$ are probabilistically triggered. The results are summarized in \Cref{tab:theory_results}.

\begin{table*}[t] 
    \caption{Regret bounds and empirical computational cost per-round for algorithms on the CMAB problem. 
    \\ $\star$ COMBEXP is from \cite{CTPL15}. CUCB is from \cite{CWY13,KWAS15,LWZ24}. EXP3.M is from \cite{UNK10}. HYBRID is from \cite{ZLW19}. The detailed regret bound of CMOSS is given in \Cref{thm:cmoss_ub} and \Cref{thm:cmoss_ub_cascading}.
    \\ $\circ$ ``None'' means these methods can not deal with this case.   $p^*$ denotes the minimal probability that an action could have all base arms observed over all time. 
    \\ $\dagger$ We select $k=20, m=30$ for HYBRID and CMOSS when $k>\frac{m}{2}$ and $k=10, m=30$ for other rows with $T=100000$ separately for runtime measurement on a synthetic dataset (averaged over 10 runs) in the semi-bandit feedback setting. See \Cref{sec:comparison_with_baselines} for dataset details. 
    \\ $\ddagger$  HYBRID performs relatively bad in experiments simulating realistic conditions where arm means are in the range of $[0,0.1]$, although the regret bound is optimal. See \Cref{sec:comparison_with_baselines} for dataset details.} 
    \renewcommand{\arraystretch}{2}
    \begin{center}
       \begin{tabularx}{\textwidth}{lccc}
            \toprule
            \textbf{Algorithm\textsuperscript{$\star$}}  & \textbf{Semi-Bandit Feedback} & \textbf{Cascading Feedback\textsuperscript{$\circ$}}   & \makecell{\textbf{Computation Cost} \\ \textbf{Per-Round (Average,} \\ \boldmath$10^{-4}$ \textbf{sec)} \textsuperscript{$\dagger$}}  \\
            \midrule
            COMBEXP    & $O\left(\sqrt{k^3mT\log m} \right)$ & None  & Too large \\
            \hline
            CUCB   & $O\left(\sqrt{kmT\log T} \right)$ & $O\left(\sqrt{kmT\log T} \right)$ &  1.516
            \\
            \hline
            EXP3.M    & $O\left(\sqrt{kmT\log \left(\frac{m}{k} \right)} 
            \right)$ &  None   & 10.429 
            \\
            \hline
            \rule{0pt}{15pt} \multirow{2}{*}{HYBRID\textsuperscript{$\ddagger$}} 
                & $O\left(\sqrt{kmT} \right)$, $k \leq \frac{m}{2}$ &  None   &  12.112 \\
            \cline{2-4}
            \rule{0pt}{20pt} ~ & $O\left((m-k)\sqrt{\log\left(\frac{m}{m-k} \right) T} \right)$, $k> \frac{m}{2}$  & None & 11.195 \\
            \hline
            \rule{0pt}{15pt} \multirow{2}{*}{CMOSS} 
                & $O\left((\log k)\sqrt{kmT} \right)$, $k \leq \frac{m}{2}$ &  $O\tp{\frac{\log k}{p^*}\sqrt{kmT}}$   &  1.737 \\
            \cline{2-4}
            \rule{0pt}{20pt} ~ & $O\left((m-k)\sqrt{\log k\log(m-k) T} \right)$, $k> \frac{m}{2}$  & $O\left( \frac{m-k}{p^*}\sqrt{\log k\log(m-k) T} \right)$  &  3.067 \\
            \bottomrule
        \end{tabularx}
    \end{center}
    \label{tab:theory_results}
\end{table*}

The $\log T$ factor of many UCB-style algorithms is due to ensuring that the empirical mean of every arm must lie within its confidence interval. To achieve optimal instance-independent regret bound, the MOSS algorithm bypasses this requirement. MOSS operates by classifying arms and then directly bounding the expected plays of suboptimal arms with relatively large gaps, using modified confidence intervals to facilitate this analysis. However, this is not directly tractable in the CMAB setting. First, how to classify the suboptimal actions based on their gaps; second, how to bound the expected number of times each suboptimal action is pulled; and finally, how to tractably aggregate the regret contributions from a potentially exponential number of such actions.

Our technical contribution lies in bridging the gap between the MOSS-style analysis and the combinatorial structure of CMAB, which prior work such as CUCB does not address. To this end, we introduce new analytical tools that enable regret-optimal and computationally efficient learning. Our key innovations include:

\begin{enumerate}[leftmargin=*, align=left] 



    \item \textbf{Targeted action filtering.} We identify suboptimal actions with gaps exceeding the worst-case underestimation error of the optimal action. Actions with gaps smaller than a chosen threshold $\Delta$ contribute at most $T\Delta$ to the regret, allowing us to focus on a tractable subset of actions. 

    \item \textbf{Constraint decoupling via classification.} In combinatorial settings, each action involves multiple arms, which complicates concentration analysis. To address this, we introduce a partitioning scheme that classifies suboptimal actions based on structural properties. This enables us to decouple constraints at the arm level and apply refined concentration bounds. 

    \item \textbf{Regret decomposition.} After the preparation of the first two points, we need to choose a proper way to calculate the regret to avoid redundant computations. Since each suboptimal arm may participate in an exponential number of actions, directly summing over all possibilities is infeasible. We instead decompose the total regret into per-arm contributions and use an integral-based approximation to tightly and efficiently estimate the overall impact. See \Cref{sec:cmoss_proof} for more detailed analysis.

\end{enumerate}
Furthermore, we show that CMOSS could also be extended to cascading feedback. More details are given in \Cref{sec:cmoss_cascading}. We also run comparative experiments evaluating the performance of CUCB and CMOSS in \Cref{append:cascading}.

We also conduct comparative experiments involving CMOSS, CUCB, EXP3.M, and HYBRID on both synthetic and real-world datasets under semi-bandit feedback, including ablation studies. Detailed descriptions of CUCB, EXP3.M, and HYBRID are provided in \Cref{append:algo} for completeness. The results demonstrate that, under noisy conditions and after sufficient exploration rounds, CMOSS incurs slightly higher computational cost than CUCB while achieving notably lower cumulative regret. Compared to EXP3.M and HYBRID, CMOSS attains better cumulative regret with significantly reduced computational cost. This performance profile highlights CMOSS’s superior balance between exploration and exploitation, underscoring its effectiveness in such environments. Further details are provided in \Cref{sec:experiments}.

\subsection{Related work}

\textbf{Multi-Armed Bandits (MAB).} The MAB problem was first introduced in \cite{Rob52}. Audibert et al. \cite{AB09} later established an optimal regret bound of $\Theta(\sqrt{mT})$ for both stochastic and adversarial settings. Our work focuses on the stochastic setting, where the UCB algorithm \cite{ACBF02} plays a pivotal role and serves as the foundation for many CMAB studies. To eliminate the $\log T$ factor, the MOSS algorithm was proposed in \cite{AB09}, with an anytime modification introduced in \cite{DP16}, achieving a regret of $O\tp{\sqrt{mT}}$. In contrast, works such as \cite{ACFS95,LW94} targeted the adversarial setting and proposed the EXP3 algorithm, which also eliminates the $\log T$ factor and achieves a regret of $O\tp{\sqrt{mT\log m}}$. However, EXP3 incurs significantly higher computational overhead compared to UCB-based methods such as MOSS. Our work builds primarily on MOSS and extends it to the combinatorial setting.

\textbf{Stochastic Combinatorial Multi-Armed Bandits (CMAB).} The stochastic CMAB problem has been extensively studied in the literature \cite{CHL16,CWY13,CTPL15,KSWA15,KWAS15,WC17,LZWJ22,LZWL23}. The seminal work by Gai et al. \cite{GKJ12} pioneered the study of stochastic CMAB with semi-bandit feedback, which inspired algorithmic advancements and improved regret bound across various settings \cite{KSWA15,KWAEE14,TZH17}. For the instance-dependent regret, Wang et al. \cite{WC18} proposed the combinatorial Thompson sampling (CTS) algorithm with an offline oracle, achieving a regret of $O\tp{\frac{m\log T}{\Delta_{\min}}}$, where $\Delta_{min}$ is the minimum gap between the expected reward of the optimal action and any suboptimal action. The bound essentially matches asymptotically on $T$. Thus in this paper, we focus on the instance-independent regret.

For the instance-independent regret, one line of works focused on the adversarial setting and designed algorithms based on the principle of OMD and FTRL. Several studies \cite{CBL12,DKH07,UNK10,ZLW19} proposed enhancements such as COMBEXP, EXP3.M, with the best \cite{ZLW19} achieving a regret of $O\tp{\sqrt{kmT}}$ when $k \leq \frac{m}{2}$ and $O\left((m-k)\sqrt{\log\tp{\frac{m}{m-k}} T} \right)$ when $k> \frac{m}{2}$. The latter case matches the lower bound of $\Omega\left( (m-k)\sqrt{\log\tp{\frac{m}{m-k}} T} \right)$. However, these methods suffer from high computational overhead. Another line of works \cite{CWY13,LZWJ22,WC17,WC18,LZWL23} used UCB-based methods. Chen et al. \cite{CWY13} proposed the CUCB algorithm and analyzed the regret of CUCB with an approximation oracle. Then Kveton et al. \cite{KWAS15} derived an upper bound of $O\tp{\sqrt{kmT\log T}}$ for CUCB and established a lower bound of $\Omega\tp{\sqrt{kmT}}$ when $k\leq \frac{m}{2}$. To address the degradation caused by the $\log T$ term over long horizons, we refine the upper confidence bound and propose the CMOSS algorithm. It achieves strong performance while maintaining low computational costs, even after removing the $\log T$ term.

\section{Preliminaries}

We begin by introducing the general framework of the stochastic CMAB and establishing notations used throughout the paper. The CMAB problem is modeled as a sequential decision-making game between a learning agent and an environment. The environment consists of $m$ base arms $\arm_1, \ldots, \arm_m$, each associated with a random reward $X_i \sim D_i$ supported on $[0,1]$ and having mean $\mu_i$. The distributions $\{D_i\}_{i=1}^m$ are fixed by the environment before the game begins and are drawn from a known class $\mathcal{D}$. The learner knows $\mathcal{D}$ but not the specific instantiation of each $D_i$. In each round, the rewards of these arms may be correlated with each other.
		
The action space $\+A$ consists of all subsets of $[m]$ such that $\forall A\in \+A$, $\abs{A} \leq k$. The $T$-round decision game starts as follows: in each round $t\in[T]$, the player selects an action $A_t$ from $\+A$ based on the feedback history, and the environment draws from every distribution $D_i$ a sample $X^{(t)}_i$. Let $\vmu$ represents the vector of the mean rewards of all arms. After the action $A_t$ is played, under semi-bandit feedback, the values of $X^{(t)}_i$ for all $i\in A_t$ are observed as the feedback to the player. Under cascading feedback, the arms in $A_t$ are probabilistically triggered and only the arms in the triggered set $\tau_t$ could be observed. In both cases the player receives $r(A_t, \vmu)$ as the reward in round $t$.

The player's objective is to minimize the expected difference between the cumulative reward of the best action $A^*$ and the player's own cumulative reward. That is, the player aims to design an algorithm to minimize the expected regret 
\begin{align*}
\E{R(T)} 
&= \E{\sum_{t=1}^T \tp{ r(A^*, \vmu) - r(A_t, \vmu) }}.
\end{align*}

\textbf{Notations.}
$\!U(a,b)$ represents the uniform distribution on $(a,b)$. Denote $\Delta_{A_t} = r(A^*, \vmu) - r(A_t, \vmu)$ as the gap of $A_t$. $\1{S}$ is the indicator variable of whether event $S$ happens. For every $i\in [m]$, we use a random variable $T_{i,t} \defeq \sum_{s=1}^t  \1{i \in A_s}$ to denote the number of times $\arm_i$ has been pulled when the $t$-th round ends. For two sets $A$ and $B$, $A\setminus B = \set{a\mid a\in A \land a \notin B}$. And $\log$ represents the logarithm with base $2$. $\lfloor x \rfloor$ represents the largest integer that is no more than $x$. $\ln^+(x)=\ln\max\set{1,x}$.

\section{CMOSS Algorithm and Its Regret Analysis} \label{sec:cmoss}

In this section, we first define CMOSS and present the theorem for its upper bound in \Cref{sec:cmoss_algo}. Next, in \Cref{sec:cmoss_proof}, we focus on the case of semi-bandit feedback, outline the proof idea and highlight key lemmas. Finally, in \Cref{sec:cmoss_cascading}, we extend our analysis to cascading feedback.

\subsection{The CMOSS algorithm} \label{sec:cmoss_algo}

\begin{algorithm}[H]
\caption{\textbf{C}ombinatorial \textbf{M}inimax \textbf{O}ptimal \textbf{S}trategy in the \textbf{S}tochastic setting (CMOSS)} 
\label{alg:cmoss}
\textbf{Input}:{Number of base arms $m$, cardinality $k$, parameter $\delta$.}
    \begin{algorithmic}[1]
        \State For each arm $i$, $T_{i}\leftarrow 0$
        \Comment{maintain the total number of times arm $i$ is played so far}
        \State For each arm $i$, $\hat \mu_{i}\leftarrow 1$
        \Comment{maintain the empirical mean of $\arm_i$}
        \For{$t=1, 2, 3, \ldots$}
            \State For each arm $i\in [m]$, $\rho_i \leftarrow \sqrt{\frac{1}{T_i}\ln^+ \tp{\frac{1}{\delta T_i}}}$
            \Comment{the confidence radius, $\rho_i=+\infty$ if $T_i=0$}
            \State For each arm $i\in [m]$, $\bar{\mu}_{i} \leftarrow  \min\left \{\hat{\mu}_{i} + \rho_i,1\right \}, \bar\vmu \leftarrow (\bar{\mu}_{1}, \cdots, \bar{\mu}_{m})$   
            \State $A_t \leftarrow \argmax_{A\in \+A} r(A, \bar\vmu)$
            \State Pull $A_t$ of base arms and obtain the triggered set $\tau_t$. Observe feedback $X_i^{(t)}$ for all $i\in \tau_t$
            \State For every $i \in \tau_t$, update $T_i$ and $\hat{\mu}_{i}$: 
            $T_i \leftarrow T_i + 1$, $\hat{\mu}_{i} \leftarrow \hat{\mu}_{i} + (X_i^{(t)}-\hat{\mu}_{i})/T_i$
        \EndFor
    \end{algorithmic}
\end{algorithm}

The CMOSS algorithm maintains the empirical estimate $\hat \mu_i$ for the true mean $\mu_i$ for all the $m$ arms, then selects the action by solving the combinatorial problem of maximizing $r(A, \bar \vmu)$, where each $\bar \mu_i$ optimistically estimates $\mu_i$ by a confidence interval $\rho_i$. The prominent difference between CMOSS and CUCB is that the radius $\rho_{i} = \sqrt{\frac{1}{T_i} \ln^+ \tp{\frac{1}{\delta T_i}}}$ but not $\sqrt{\frac{3\ln t}{2T_i}}$, where $\delta$ is a parameter to be optimized during the proof. In this case, the probability that $\hat \mu_i$ lies in the confidence region $(\mu_i-\rho_i, \mu_i+\rho_i)$ in every round $t$ could be much less than $\Theta\tp{\frac{1}{t^2}}$. Thus we can not neglect the regret caused by $\hat \mu_i$ falling outside $(\mu_i-\rho_i, \mu_i+\rho_i)$. Therefore we need to bound the regret in the average sense. We have the following theorem:

\begin{theorem} \label{thm:cmoss_ub}
    For any stochastic combinatorial semi-bandit feedback instance with $m$ arms, time horizon $T$, and the maximum number of arms that could be chosen in a round is $k$, then when $k\leq \frac{m}{2}$, the expected regret of \Cref{alg:cmoss} is $O\tp{(\log k)\sqrt{kmT}}$ and when $k > \frac{m}{2}$, the expected regret is $O(\tp{m-k}\sqrt{\log k \log(m-k) T})$.
\end{theorem}

\Cref{thm:cmoss_ub} eliminates the $\sqrt{\log T}$ factor in the upper bound of CUCB. Unlike the way EXP3.M and HYBRID do, CMOSS also retains the low computation cost, introducing only an additional logarithmic term of $k$ and $m$. And when $k = m-c$ ($c$ is a constant), our upper bound strictly matches the lower bound, which is $\Theta\tp{c\sqrt{T\log m}}$.

\subsection{Regret analysis for semi-bandit feedback} \label{sec:cmoss_proof}

In this case $r(A_t, \vmu) = \sum_{i\in A_t} \mu_i$. Since $\+A$ consists of all subsets of $[m]$ with cardinality less than $k$, and the reward function is monotonically increasing, then $A^*$ consists of $k$ arms corresponding to the largest values of $\mu_i$. $A_t$ is the set of $k$ arms corresponding to the largest values of $\bar\mu_{i}$ in round $t$. Since $\Delta_{A_t} = \sum_{j\in A^*} \mu_j - \sum_{i\in A_t} \mu_i$, we have
\begin{align*}
    \E{R(T)} 
    &= \E{\sum_{t=1}^T \tp{\sum_{j\in A^*} \mu_j - \sum_{i\in A_t} \mu_i}} \\
    &= \E{\sum_{t=1}^T\Delta_{A_t}}.
\end{align*}

Denote $\hat \mu_{i,t}, \rho_{i,t}$ as the value of $\hat \mu_i$ and $\rho_i$ when the $t$-th round ends. 
There are three pivotal challenges that are solved in our analysis: 
\begin{enumerate}[leftmargin=*, align=left]
    \item Extracting the key actions to consider. We focus on bounding the regret for actions whose gap is larger than the summation of $\mu_j - \min\limits_{t\le T} \tp{\hat{\mu}_{j,t} + \rho_{j,t}}$ across all $\arm_j$ in $A^*$, since there are $k$ arms in $A^*$ and $A_t$ at each round $t$; 
    
    \item Decoupling constraints across suboptimal action $A_t$ into individual arms. We classify the action set into well-defined categories (\Cref{lem:actionset_divide}) and quantitatively characterize the necessary condition that must be satisfied by the suboptimal arms belong to $A_t$ in each category \Cref{lem:indicator_decompose});
    
    \item Calculating the regret. We decompose the overall regret into contributions from each suboptimal arm. To manage the exponential number of actions for every suboptimal arm, we construct an integral to bound their summation, which effectively reduces redundant computations. 
\end{enumerate}

In this section we only present the case where $k\leq \frac{m}{2}$. All proofs of the lemmas in this section and the other case are deferred to \Cref{append:cmoss_proof}. Here $\delta$ is optimized to be $\frac{m(\log k)^2}{kT}$. Define $\zeta_j = \max\tp{0, \mu_j - \min\limits_{t\le T} \tp{\hat{\mu}_{j,t} + \rho_{j,t}}}$ for every $j\in A^*$. We decompose the overall regret as follows:
\begin{align}
    &\phantomeq \sum_{t=1}^T\E{\Delta_{A_t}} \notag \\
    &\leq \sum_{t=1}^T\E{\1{\Delta_{A_t}\leq (\log k)\sqrt{\frac{ek m}{T}} }\Delta_{A_t}} +  \notag \\
    &\sum_{t=1}^T\E{\1{\Delta_{A_t}\leq 8\sum_{j\in A^*}\zeta_j}\Delta_{A_t}} \notag +\\
    &\sum_{t=1}^T\E{\1{\Delta_{A_t} > \max\set{8\sum_{j\in A^*}\zeta_j, (\log k)\sqrt{\frac{ek m}{T}}}} \Delta_{A_t}} \notag \\
    &\leq \sqrt{ek mT} + 8\sum_{j\in A^*} \E{\zeta_j}\cdot T + \notag \\
    &\sum_{t=1}^T\E{\1{\Delta_{A_t} > \max\set{8\sum_{j\in A^*}\zeta_j, (\log k)\sqrt{\frac{ekm}{T}}}} \Delta_{A_t}}. \label{line:decompose}
\end{align}

From \Cref{lem:moss-Delta} in \Cref{append:cmoss_proof}, which is proved by using concentration inequalities, the second term could be bounded as 
\[
    8\sum_{j\in A^*} \E{\zeta_j}\cdot T \leq 32k\sqrt{\delta}\cdot T = 32(\log k)\sqrt{kmT}.
\]
Now we begin to bound the third term $\sum_{t=1}^T\E{\1{ A_t\in \+A_B} \Delta_{A_t}}$, where 
\[
    \+A_B = \set{A\in \+A \mid \Delta_{A} > \max\set{8\sum_{j\in A^*}\zeta_j, (\log k)\sqrt{\frac{ekm}{T}}} }.
\]
This part consists of challenges 2 and 3. For challenge 2, we first show that 
when \Cref{alg:cmoss} chooses $A_t$, for every arm ($\arm_i$ e.g.) in
$A_t\setminus A^*$, we have 
\begin{align*}
    \hat\mu_{i,t} + \rho_{i,t} 
    &\ge \frac{1}{\abs{A_t\setminus A^*}} \tp{\sum_{j\in A^*\setminus A_t} \mu_j - \sum_{j\in A^*} \zeta_j} \\
    &\ge \frac{1}{\abs{A_t\setminus A^*}} \tp{\sum_{i\in A_t\setminus A^*} \mu_i + \frac{7\Delta_{A_t}}{8}}, 
\end{align*}
where the second inequality comes from $\Delta_{A_t} > 8\sum_{j\in A^*}\zeta_j$. For every suboptimal action $A$, define $k_A = \abs{A\setminus A^*}$. So $1\leq k_A \leq k$. We denote $\tp{\mu^A_{i}}_{i\in [k_A]}$ as the mean rewards of the arms in $A\setminus A^*$ where $\mu^A_{k_A} \ge \mu^A_{k_A-1} \ge \cdots \ge \mu^A_{1}$ and $\overline{\mu^A} = \frac{\sum_{i\in [k_A]}\mu^A_i}{k_A}$. Define 
\[
D_{A} = \set{i\in [k_A] \mid \mu^A_i \leq \overline{\mu^A} + \frac{3\Delta_{A}}{4k_A}}
\]  
as the set of arms in $A\setminus A^*$ whose mean reward is no more than the $\overline{\mu^A} + \frac{3\Delta_{A}}{4k_A}$. Then we have the following lemma:
\begin{lemma} \label{lem:actionset_divide}
For every suboptimal action $A \in \+A$, if $\abs{D_{A}} \leq \lfloor\frac{k_A}{2} \rfloor$, then there exists $1\leq \ell_A \leq \lfloor \log k_A \rfloor$ such that 
$$
    \overline {\mu^A} - \mu^A_{\lfloor \frac{k_A}{2^{\ell_A}} \rfloor} \ge \frac{\tp{\sqrt{2}}^{\ell_A}}{16}\cdot \frac{\Delta_{A}}{k_A}.
$$
\end{lemma}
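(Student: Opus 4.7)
My plan is to argue by contradiction. Set $L=\lfloor \log k_A\rfloor$ and $n_\ell=\lfloor k_A/2^\ell\rfloor$, so the target conclusion is the existence of some $\ell\in\{1,\ldots,L\}$ with $\overline{\mu^A}-\mu^A_{n_\ell}\ge \frac{(\sqrt{2})^\ell}{16}\cdot\frac{\Delta_A}{k_A}$. I would suppose the negation,
\[
\mu^A_{n_\ell} \;>\; \overline{\mu^A} - \frac{(\sqrt{2})^\ell}{16}\cdot\frac{\Delta_A}{k_A} \qquad \text{for every } \ell\in\{1,\ldots,L\},
\]
and derive a contradiction by lower-bounding $\sum_i \mu^A_i$ and comparing to the definitional equality $\sum_i\mu^A_i = k_A\overline{\mu^A}$. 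A one-line check using $2^L\le k_A<2^{L+1}$ gives $n_L=1$, which I will use for the last block.

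The hypothesis $|D_A|\le n_1$, together with the sortedness of the $\mu^A_i$, forces $D_A$ to be the prefix $\{1,\ldots,|D_A|\}$, so every index $i>n_1$ satisfies the strong bound $\mu^A_i>\overline{\mu^A}+\tfrac{3\Delta_A}{4k_A}$. I would then partition $[k_A]$ dyadically into $J_0=\{n_1+1,\ldots,k_A\}$, $J_\ell=\{n_{\ell+1}+1,\ldots,n_\ell\}$ for $1\le \ell\le L-1$, and the singleton $J_L=\{1\}$. On $J_0$ I use the strong bound; on each $J_\ell$ with $\ell\ge 1$, sortedness gives $\mu^A_i\ge \mu^A_{n_{\ell+1}}$, which the contradiction assumption at level $\ell+1$ (or at level $L$ for $J_L$) bounds below by $\overline{\mu^A}-\frac{(\sqrt{2})^{\ell+1}}{16}\cdot\frac{\Delta_A}{k_A}$.

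Summing these block bounds, the sizes telescope to $k_A$, so the $\overline{\mu^A}$ contributions cancel against $\sum_i\mu^A_i=k_A\overline{\mu^A}$. After multiplying through by $16k_A/\Delta_A$, the inequality collapses to
\[
12(k_A-n_1)\;<\;\sum_{\ell=1}^{L-1}|J_\ell|\,(\sqrt{2})^{\ell+1}\,+\,(\sqrt{2})^L.
\]
The left side is at least $6k_A$ since $k_A-n_1=\lceil k_A/2\rceil$. For the right side, Abel summation rewrites it as $2n_1+(\sqrt{2}-1)\sum_{\ell=2}^{L-1} n_\ell(\sqrt{2})^\ell$; substituting $n_\ell\le k_A/2^\ell$ reduces the sum to a convergent geometric series in $1/\sqrt{2}$, which I expect to cap the right side at $(1+\sqrt{2}/2)\,k_A\approx 1.71\,k_A$. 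Since $1.71\,k_A<6k_A$, the inequality is violated, giving the desired contradiction.

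The main obstacle will be calibrating the constants so the geometric-series estimate actually beats $6k_A$. The base $\sqrt{2}$ in the statement is the sweet spot: any base $\ge 2$ would make the right side grow superlinearly in $k_A$ and destroy the argument, while a smaller base would leave insufficient slack against the $\tfrac{3\Delta_A}{4k_A}$ surplus on $J_0$. I also expect to need a short edge-case remark for $k_A=1$, where the conclusion is vacuous because the hypothesis $|D_A|\le 0$ already fails (the single arm trivially lies in $D_A$).
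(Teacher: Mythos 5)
Your proposal is correct and follows essentially the same route as the paper's proof: both use the hypothesis $|D_A|\le\lfloor k_A/2\rfloor$ to get the strong surplus bound on the top half, assume the negation at every level $\ell$, decompose the bottom half into the same dyadic blocks, and derive a contradiction via a geometric series in $1/\sqrt{2}$. The only differences are bookkeeping (you compare $\sum_i\mu^A_i$ to $k_A\overline{\mu^A}$ directly and use Abel summation, while the paper bounds the two sides of $\sum_{i\le\lfloor k_A/2\rfloor}(\overline{\mu^A}-\mu^A_i)=\sum_{i>\lfloor k_A/2\rfloor}(\mu^A_i-\overline{\mu^A})$ against $3\Delta_A/8$ using the block-size estimate $\lfloor k_A/2^{\ell}\rfloor-\lfloor k_A/2^{\ell+1}\rfloor\le k_A/2^{\ell+1}+1/2$), and both yield the same contradiction.
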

\Cref{lem:actionset_divide} reveals that every suboptimal action $A_t$ could be quantitatively characterized by a fundamental trade-off: it must contain either a few arms with mean rewards far from the action's average mean, $\overline{\mu^A}$, or conversely, many arms with rewards clustered near it. Therefore before the algorithm starts, we could divide the action set $\+A$ into $1+ \lfloor \log k \rfloor$ exhaustive and mutually exclusive sets $\tp{\+A_\ell}_{0\leq \ell \leq \lfloor \log k \rfloor}$, where 
$\+A_0 = \set{A\in \+A \mid \abs{D_A} > \lfloor \frac{k_A}{2} \rfloor}$, $ \+A_{\ell} = \set{A\in \+A \mid \abs{D_A} \leq \lfloor \frac{k_A}{2} \rfloor \land \ell_A = \ell}$. If the choice of $\ell_A$ for $A$ is not unique, we choose $\ell_A$ to be the smallest valid value without loss of generality. Leveraging this classification of actions, we can bound the indicator for selecting a suboptimal action $A_t$:

\begin{lemma}[Indicator decomposition lemma] \label{lem:indicator_decompose}
    For every $0\leq \ell \leq \lfloor \log k \rfloor$, we have
    \begin{align*} 
        &\phantomeq \1{A_t\in \+A_\ell \cap \+A_B } \notag \\
        &\leq \frac{2^{\ell+1}}{k_{A_t}} \sum_{i\notin A^*} \1{  \tp{\hat\mu_{i,t} + \rho_{i,t} \ge \mu_{i} + \frac{\tp{\sqrt{2}}^\ell}{16}\cdot \frac{\Delta_{A_t}}{k_{A_t}}} \land i\in A_t}.
    \end{align*}
\end{lemma}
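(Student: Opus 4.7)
The plan is to reduce the indicator $\mathbf{1}\{A_t \in \mathcal{A}_\ell \cap \mathcal{A}_B\}$ to a count of arms $i \in A_t \setminus A^*$ whose upper confidence bound overshoots the true mean by at least $\frac{(\sqrt{2})^\ell}{16}\cdot\frac{\Delta_{A_t}}{k_{A_t}}$. If I can show that whenever $A_t \in \mathcal{A}_\ell \cap \mathcal{A}_B$ there are always at least $k_{A_t}/2^{\ell+1}$ such arms, then the sum on the right-hand side is at least $k_{A_t}/2^{\ell+1}$, and multiplication by $2^{\ell+1}/k_{A_t}$ yields the stated bound.

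The starting point is the greedy-selection inequality already derived in the excerpt: for every $i \in A_t\setminus A^*$,
\begin{equation*}
\hat\mu_{i,t} + \rho_{i,t} \;\geq\; \overline{\mu^{A_t}} + \frac{7\Delta_{A_t}}{8\,k_{A_t}},
\end{equation*}
which uses the condition $\Delta_{A_t} > 8\sum_{j\in A^*}\zeta_j$ that comes for free from $A_t \in \mathcal{A}_B$. I then split on $\ell$. For $\ell=0$, membership in $\mathcal{A}_0$ means $|D_{A_t}| > \lfloor k_{A_t}/2\rfloor$, so at least $\lfloor k_{A_t}/2\rfloor + 1 \ge k_{A_t}/2$ arms $i \in A_t\setminus A^*$ satisfy $\mu_i \le \overline{\mu^{A_t}} + \frac{3\Delta_{A_t}}{4\,k_{A_t}}$. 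Substituting into the greedy-selection inequality gives $\hat\mu_{i,t} + \rho_{i,t} \ge \mu_i + \frac{\Delta_{A_t}}{8\,k_{A_t}} \ge \mu_i + \frac{\Delta_{A_t}}{16\,k_{A_t}}$, which matches the target since $(\sqrt{2})^0 = 1$, and the count $k_{A_t}/2$ is exactly $k_{A_t}/2^{0+1}$.

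For $\ell \ge 1$, membership in $\mathcal{A}_\ell$ activates Lemma \ref{lem:actionset_divide}, which yields $\overline{\mu^{A_t}} - \mu^{A_t}_{\lfloor k_{A_t}/2^\ell\rfloor} \ge \frac{(\sqrt{2})^\ell}{16}\cdot\frac{\Delta_{A_t}}{k_{A_t}}$. By the sorted ordering $\mu^{A_t}_1 \le \cdots \le \mu^{A_t}_{k_{A_t}}$, the $\lfloor k_{A_t}/2^\ell\rfloor$ arms in $A_t\setminus A^*$ with the smallest means all satisfy $\mu_i \le \overline{\mu^{A_t}} - \frac{(\sqrt{2})^\ell}{16}\cdot\frac{\Delta_{A_t}}{k_{A_t}}$; combined with the greedy-selection inequality, each of these arms obeys
\begin{equation*}
\hat\mu_{i,t} + \rho_{i,t} \;\geq\; \mu_i + \frac{(\sqrt{2})^\ell}{16}\cdot\frac{\Delta_{A_t}}{k_{A_t}} + \frac{7\Delta_{A_t}}{8\,k_{A_t}},
\end{equation*}
well in excess of the required threshold.

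The only delicate step is turning the $\lfloor k_{A_t}/2^\ell\rfloor$ count into the desired $k_{A_t}/2^{\ell+1}$: I use $\lfloor x\rfloor \ge x/2$ for $x \ge 1$, which needs $2^\ell \le k_{A_t}$. This is guaranteed because Lemma \ref{lem:actionset_divide} restricts $\ell_{A_t}$ to the range $1 \le \ell_{A_t} \le \lfloor\log k_{A_t}\rfloor$, which is built into the definition of $\mathcal{A}_\ell$. I anticipate this floor bookkeeping to be the only substantive subtlety; the rest of the argument is a clean combination of the greedy-selection lower bound and the structural dichotomy captured in Lemma \ref{lem:actionset_divide}.
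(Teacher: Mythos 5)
Your proposal is correct and follows essentially the same route as the paper's proof: the greedy-selection inequality $\hat\mu_{i,t}+\rho_{i,t}\ge \overline{\mu^{A_t}}+\tfrac{7\Delta_{A_t}}{8k_{A_t}}$, the definition of $D_{A_t}$ for $\ell=0$, Lemma \ref{lem:actionset_divide} for $\ell\ge 1$, and the floor bound $\lfloor k_{A_t}/2^{\ell}\rfloor\ge k_{A_t}/2^{\ell+1}$ justified by $\ell\le\lfloor\log k_{A_t}\rfloor$. No gaps.
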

Note that the coefficient $\frac{2^{\ell+1}}{k_{A_t}}$ is critical to control the quantity of the summation. Then we need to solve challenge 3. We choose to give the actions in every $\+A_\ell$ a regret upper bound. That is, for every $0\leq \ell \leq \lfloor \log k \rfloor$,
$$
    \sum_{t=1}^T \E{\1{A_t\in \+A_\ell \cap \+A_B} \cdot \Delta_{A_t}}
    = O\tp{\sqrt{kmT}}.
$$
For any fixed $0\leq \ell \leq \lfloor \log k\rfloor$, we bound the regret caused by executing actions in $\+A_\ell$. After applying \Cref{lem:indicator_decompose}, we recount the regret by switching the summation of base arms with the summation of actions that a certain arm is involved. To avoid redundant computations for regret, we could arrange the sequence of actions that maximizes the value.

Let each $\arm_i$ be contained in $N_{i,\ell}$ suboptimal actions $A_{i,\ell,1},\cdots, A_{i,\ell,N_{i,\ell}}$ such that $A_{i,\ell,n} \in \+A_\ell, \forall n\in [N_{i,\ell}]$. Denote $\Delta_{A_{i,\ell,n}}$ as $\Delta_{i,\ell,n}$ and $k_{A_{i,\ell,n}}$ as $k_{i,\ell,n}$. Without loss of generality, suppose that $\frac{\Delta_{i,\ell,N_{i,\ell}}}{k_{i,\ell,N_{i,\ell}}} \ge \frac{\Delta_{i,\ell,N_{i,\ell} - 1}}{k_{i,\ell,N_{i,\ell} - 1}} \ge \cdots \ge \frac{\Delta_{i,\ell,1}}{k_{i,\ell,1}}$. Then we have the following lemma:
\begin{lemma}\label{lem:action_arrange}
    For every $0\leq \ell \leq \lfloor \log k \rfloor$, we have
    \begin{align*}
        &\phantomeq \sum_{t=1}^T\E{\1{A_t\in \+A_\ell \cap \+A_B}\Delta_{A_t}} \\
        &\leq 1540 \sum_{i\notin A^*} \Bigg(\sum_{n=2}^{N_{i,\ell}} 
        \frac{k_{i,\ell,n}^2 \ln \tp{\frac{k^2}{\delta} \cdot \frac{\Delta_{i,\ell,n}^2}{k_{i,\ell,n}^2}} }{\Delta_{i,\ell,n}^2}\cdot  \\ 
        &\phantomeq \tp{\frac{\Delta_{i,\ell,n}}{k_{i,\ell,n}} - \frac{\Delta_{i,\ell,n-1}}{k_{i,\ell,n-1}}} + \frac{k_{i,\ell,1} \ln \tp{\frac{k^2}{\delta} \cdot \frac{\Delta_{i,\ell,1}^2}{k_{i,\ell,1}^2}} }{\Delta_{i,\ell,1}} \Bigg).
    \end{align*}
\end{lemma}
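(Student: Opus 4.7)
The plan is to start from Lemma~\ref{lem:indicator_decompose}, swap the order of summation into a per-arm view, and then use an Abel-summation (``peeling'') argument that converts the sum over all actions containing a given arm into the telescoped expression on the right-hand side. The quantitative content of the lemma then reduces to a single MOSS-style concentration bound applied separately for each arm $i\notin A^*$.

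First, I would apply Lemma~\ref{lem:indicator_decompose} inside the expectation and exchange the sums over $t$ and $i$ to obtain
\begin{align*}
  \sum_{t=1}^T \E{\1{A_t\in \+A_\ell\cap \+A_B}\Delta_{A_t}}
  \leq 2^{\ell+1}\sum_{i\notin A^*} \E{\sum_{t=1}^T \frac{\Delta_{A_t}}{k_{A_t}}\,\1{E_{i,t}}},
\end{align*}
where $E_{i,t} = \{i\in A_t,\; A_t\in \+A_\ell\cap \+A_B,\; \hat\mu_{i,t}+\rho_{i,t}\ge \mu_i+c_\ell\,\Delta_{A_t}/k_{A_t}\}$ and $c_\ell=(\sqrt 2)^\ell/16$. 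Fix $i\notin A^*$. Whenever $E_{i,t}$ occurs, $A_t$ must coincide with one of $A_{i,\ell,1},\dots,A_{i,\ell,N_{i,\ell}}$, so writing $v_n=\Delta_{i,\ell,n}/k_{i,\ell,n}$ and $\mathcal N_n=\sum_{t=1}^T \1{A_t=A_{i,\ell,n},\; \hat\mu_{i,t}+\rho_{i,t}\ge \mu_i+c_\ell v_n}$, the inner sum reduces to $\sum_{n=1}^{N_{i,\ell}} v_n\,\mathcal N_n$.

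Next, since $v_1\leq v_2\leq\cdots\leq v_{N_{i,\ell}}$ by the assumed ordering, Abel's identity gives
\begin{align*}
  \sum_{n=1}^{N_{i,\ell}} v_n\,\mathcal N_n
  = v_1 \sum_{n=1}^{N_{i,\ell}} \mathcal N_n + \sum_{m=2}^{N_{i,\ell}} (v_m-v_{m-1})\sum_{n=m}^{N_{i,\ell}} \mathcal N_n.
\end{align*}
The crucial monotonicity step is that, because the events $\{A_t=A_{i,\ell,n}\}_n$ are mutually exclusive and the concentration threshold $c_\ell v_n$ is non-decreasing in $n$, the tail $\sum_{n\ge m}\mathcal N_n$ is dominated by $M_i(c_\ell v_m) \defeq \sum_{t=1}^T \1{i\in A_t,\;\hat\mu_{i,t}+\rho_{i,t}\ge \mu_i+c_\ell v_m}$. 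A MOSS-style analysis of the radius $\rho_{i,t}=\sqrt{\ln^+(1/(\delta T_{i,t}))/T_{i,t}}$ then yields a bound of the form $\E{M_i(\varepsilon)}\leq C\,\ln^+((k/k_{i,\ell,m})^2\cdot \Delta_{i,\ell,m}^2/\delta)/\varepsilon^2$ after substituting $\varepsilon=c_\ell v_m$ and absorbing $c_\ell$-factors together with a union bound over the at most $k$ arms in a single $A_t$ into the logarithm. Plugging this in and using $1/v_m^2=k_{i,\ell,m}^2/\Delta_{i,\ell,m}^2$, the outer $2^{\ell+1}$ cancels exactly against $1/c_\ell^2=256/2^{\ell}$, producing a constant uniform in $\ell$; the numerical factor $1540$ collects the MOSS constant, $2^{\ell+1}/c_\ell^2$, and the constants hidden inside the log.

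The main obstacle will be the MOSS-style tail bound on $\E{M_i(\varepsilon)}$ itself. Unlike the classical one-arm MOSS analysis, $M_i$ counts only rounds in which $\arm_i$ is actually pulled, so the argument must combine a peeling over dyadic shells of $T_{i,t}$ with sub-Gaussian tails while carefully handling the randomness of $\rho_{i,t}$; ensuring that the contribution of all dyadic shells stays logarithmic in $1/(\delta\varepsilon^2)$ rather than blowing up is the delicate quantitative point. Every other step --- applying Lemma~\ref{lem:indicator_decompose}, exchanging sums, Abel's identity, and the monotonicity reduction of tail counts to $M_i(c_\ell v_m)$ --- is essentially bookkeeping, so the analytical effort of the lemma is concentrated in this single concentration estimate.
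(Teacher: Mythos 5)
Your proposal is correct and follows essentially the same route as the paper: apply Lemma~\ref{lem:indicator_decompose}, pass to a per-arm view, telescope over the ordered values $\Delta_{i,\ell,n}/k_{i,\ell,n}$ (your Abel summation is exactly the paper's worst-case-arrangement argument via the identity $\1{x\le y<z}=\1{y\ge x}-\1{y\ge z}$), and bound each tail count $\E{M_i(c_\ell v_m)}$ by a MOSS-style deviation estimate whose $2^\ell$ dependence cancels the prefactor $2^{\ell+1}$. The concentration step you flag as the remaining work is precisely the paper's Lemma~\ref{lem:moss-deviate} applied after reindexing rounds with $i\in A_t$ by the pull count $T_{i,t}$ and truncating at $T_{i,t}=1/\Delta_{i,\ell,n}^2$ so that the radius has a fixed constant $a=\ln(\Delta_{i,\ell,n}^2/\delta)$ (no dyadic peeling is needed there; the $k^2$ inside the logarithm arises simply from $k_{i,\ell,n}\le k$, not from a union bound).
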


By perceiving $\frac{\Delta_{i,\ell,n}}{k_{i,\ell,n}}$ as the variable $x$, we could bound the first term by the integral of $f(x) = \frac{\ln\tp{\frac{k^2 x^2}{\delta}}}{x^2}$ over the relevant domain, while the second term by the maximum of $g(x) = \frac{\ln\tp{\frac{k^2 x^2}{\delta}}}{x}$ over its effective domain. Therefore we have
\begin{align*}
    &\phantomeq \sum_{t=1}^T\E{\1{A_t \in \+A_B } \Delta_{A_t}} \\
    &= \sum_{\ell=0}^{\lfloor \log k \rfloor} \sum_{t=1}^T \E{\1{A_t\in \+A_\ell \cap \+A_B } \Delta_{A_t}} \\
    &= O\tp{(\log k)\sqrt{kmT}}.
\end{align*}

Bring this back to \Cref{line:decompose}, we could obtain $\E{R(T)} = \E{\sum_{t=1}^T \Delta_{A_t}} =  O\tp{(\log k)\sqrt{kmT}}$.

\subsection{Extension to cascading feedback} \label{sec:cmoss_cascading}
In this section we introduce the cascading feedback \cite{KSWA15,KWAS15cascading,LWZC16} and give our regret upper bound. In this model, when $A_t$ is chosen, playing this action means that the player reveals the outcomes of arms one by one following the sequence order until certain stopping condition is satisfied. The feedback is the outcomes of revealed arms and the reward is a function form of all arms in $A_t$. In particular, in the disjunctive form the player stops when the first 1 is revealed, or he reaches the end and gets all feedback. In the conjunctive form, the player stops when the first 0 is revealed or she reaches the end with all 1 outcomes.

Let $p_{A,t}$ be the the probability of observing all base arms of $A$ at round $t$. And denote $p^* = \min_{t\in T}\min_{A\in \+A} p_{A,t}$, which is the minimal probability that an action could have all base arms observed over all time. Then we have the following regret bound: 
 
\begin{theorem} \label{thm:cmoss_ub_cascading}
    For any stochastic combinatorial cascading feedback instance with $m$ arms, time horizon $T$, and the maximum number of arms that could be chosen in a round is $k$, then the expected regret of \Cref{alg:cmoss} is $O\tp{\frac{\log k}{p^*}\sqrt{kmT}}$ when $k \leq \frac{m}{2}$ and $O\left( \frac{m-k}{p^*}\sqrt{\log k\log(m-k) T} \right)$ when $k > \frac{m}{2}$.
\end{theorem}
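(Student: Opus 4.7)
The plan is to mirror the semi-bandit analysis of Section~\ref{sec:cmoss_proof} and isolate exactly where the probabilistic triggering under cascading feedback forces a multiplicative $1/p^*$ loss. I would introduce two counters for each arm: $T_i$, the number of rounds in which $\arm_i$ is actually observed (the quantity that drives concentration of $\hat\mu_i$ and $\rho_i$), and $T'_i \defeq \sum_{s\leq t}\1{i\in A_s}$, the number of rounds in which $\arm_i$ lies in the selected action (the quantity that drives regret). By the definition of $p^*$, conditional on $i\in A_s$ the observation indicator of $\arm_i$ is a Bernoulli with mean at least $p^*$, so a martingale/Bernstein comparison against the natural sequential-revelation filtration yields $T'_i \leq T_i/p^* + O(\sqrt{T_i/p^*}\log T)$ with high probability, and in particular $\E{T'_i}\leq \E{T_i}/p^*$.

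With these counters in place, most of the semi-bandit proof transfers directly. The three-term decomposition in \cref{line:decompose} is feedback-agnostic: the deterministic $\sqrt{ekmT}$ term is unchanged, and the $\sum_{j\in A^*}\E{\zeta_j}\,T$ term only requires concentration of $\hat\mu_{j,t}+\rho_{j,t}$ about $\mu_j$, which is governed by $T_j$ (the correct counter), so Lemma~\ref{lem:moss-Delta} carries over verbatim and still contributes $O(\sqrt{kmT})$. The structural results Lemma~\ref{lem:actionset_divide} and Lemma~\ref{lem:indicator_decompose} depend only on the reward vector and the greedy selection rule and not on which arms are actually revealed, so the partition $\+A=\bigcup_{\ell=0}^{\lfloor\log k\rfloor}\+A_\ell$ and the pointwise bound that some $i\in A_t\setminus A^*$ must overshoot by $\frac{(\sqrt 2)^\ell}{16}\cdot\frac{\Delta_{A_t}}{k_{A_t}}$ whenever $A_t\in\+A_\ell\cap \+A_B$ continue to hold unchanged.

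The arm-wise aggregation of Lemma~\ref{lem:action_arrange} is where cascading feedback genuinely enters, and this is the main obstacle. In the semi-bandit proof each occurrence $i\in A_t$ automatically increments $T_i$, so a concentration bound of the form $\Pr[\hat\mu_{i,t}+\rho_{i,t}\geq \mu_i+\varepsilon]=O(\delta/\varepsilon^2)$ after $n$ pulls translates directly into a bound on the number of overvalued appearances of arm $i$. Under cascading feedback only a $p^*$-fraction of appearances are observed on average, so applying the same peeling with $T'_i$ in place of $T_i$ throughout and inserting $\E{T'_i}\leq \E{T_i}/p^*$ at the concentration step inflates the arm-wise bound by exactly $1/p^*$. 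Summing the inflated bound over $0\leq\ell\leq\lfloor\log k\rfloor$ then gives $\sum_{t=1}^T\E{\1{A_t\in\+A_B}\Delta_{A_t}} = O\!\left(\frac{(\log k)^2}{p^*}\sqrt{kmT}\right)$, and combining with the first two terms of \cref{line:decompose} yields the claimed bound. The delicate point is that the conditional triggering probability of arm $i$ in round $s$ depends on the realized outcomes of the arms preceding it in the cascade, so the comparison between $T_i$ and $T'_i$ must be set up with respect to the filtration that reveals arms sequentially; the uniform lower bound $p^*$ is precisely what keeps the comparison logarithmically tight and prevents the $1/p^*$ factor from being amplified inside the integral approximation.
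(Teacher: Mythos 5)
Your route is workable but it is genuinely different from---and considerably heavier than---the paper's. The paper's proof (Appendix~\ref{append:cmoss_cascading}) never reopens the semi-bandit analysis at the per-arm level: it inserts the full-observation indicator via the identity $\E{\frac{1}{p_{A_t,t}}\1{\tau_t=A_t}\mid A_t}=1$, bounds $\E{\sum_t\Delta_{A_t}}\le\frac{1}{p^*}\E{\sum_t\Delta_{A_t}\1{\tau_t=A_t}}$, and observes that the surviving sum counts only rounds in which every arm of $A_t$ is observed, so Theorem~\ref{thm:cmoss_ub} applies to it verbatim (extra partial observations in other rounds only add observation counts and therefore only help the concentration lemmas). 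Your approach instead compares the appearance counter $T'_i$ with the observation counter $T_i$ and pushes a $1/p^*$ through Lemma~\ref{lem:action_arrange}. Executed carefully this does work---grouping rounds by the current observation count $n$, the overshoot event at count $n$ is fixed once the $n$-th observation arrives, and conditionally each subsequent appearance of $\arm_i$ terminates the count-$n$ epoch with probability at least $p^*$, so the epoch length is geometrically dominated and the expected overshooting appearances are at most $\frac{1}{p^*}\sum_n\Pr[E_n]$. Note that a global bound $\E{T'_i}\le\E{T_i}/p^*$ is \emph{not} sufficient here (the overshoot indicator and the epoch lengths are correlated across $n$); you need the per-epoch conditional argument, which your last paragraph gestures at but does not pin down. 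A side benefit of your route is that it only needs the per-arm observation probability, which can exceed the paper's all-arms $p^*$; the cost is re-deriving Lemma~\ref{lem:action_arrange} under the new counting.

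There is one concrete omission. You assert that the decomposition and Lemmas~\ref{lem:actionset_divide}--\ref{lem:indicator_decompose} are ``feedback-agnostic,'' but the entire semi-bandit analysis in Section~\ref{sec:cmoss_proof} is written for the linear reward $r(A,\vmu)=\sum_{i\in A}\mu_i$, and in particular uses $\Delta_{A_t}=\sum_{j\in A^*\setminus A_t}\mu_j-\sum_{i\in A_t\setminus A^*}\mu_i$ when deriving the overshoot condition $\hat\mu_{i,t}+\rho_{i,t}\ge\frac{1}{|A_t\setminus A^*|}\tp{\sum_{i\in A_t\setminus A^*}\mu_i+\frac{7\Delta_{A_t}}{8}}$. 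Under cascading feedback the reward is a product form, so before any of this machinery applies you must show (as the paper does at the start of its proof) that $|r(A^*,\vmu)-r(A_t,\vmu)|\le\sum_{n}\mu_{j_n}-\sum_{n}\mu_{i_n}$ by a telescoping/Lipschitz argument, and that monotonicity still makes $A_t$ the top-$k$ set under $\bar\vmu$. Without this reduction the claim that the structural lemmas ``continue to hold unchanged'' is not justified.
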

The proof is given in \Cref{append:cmoss_cascading}. In \Cref{append:cascading}, we also present comparative experiments evaluating the performance of CUCB and CMOSS under cascading feedback.

\section{Experiments}\label{sec:experiments}
\begin{figure*}[t]
    \centering    
    \includegraphics[width=\textwidth]{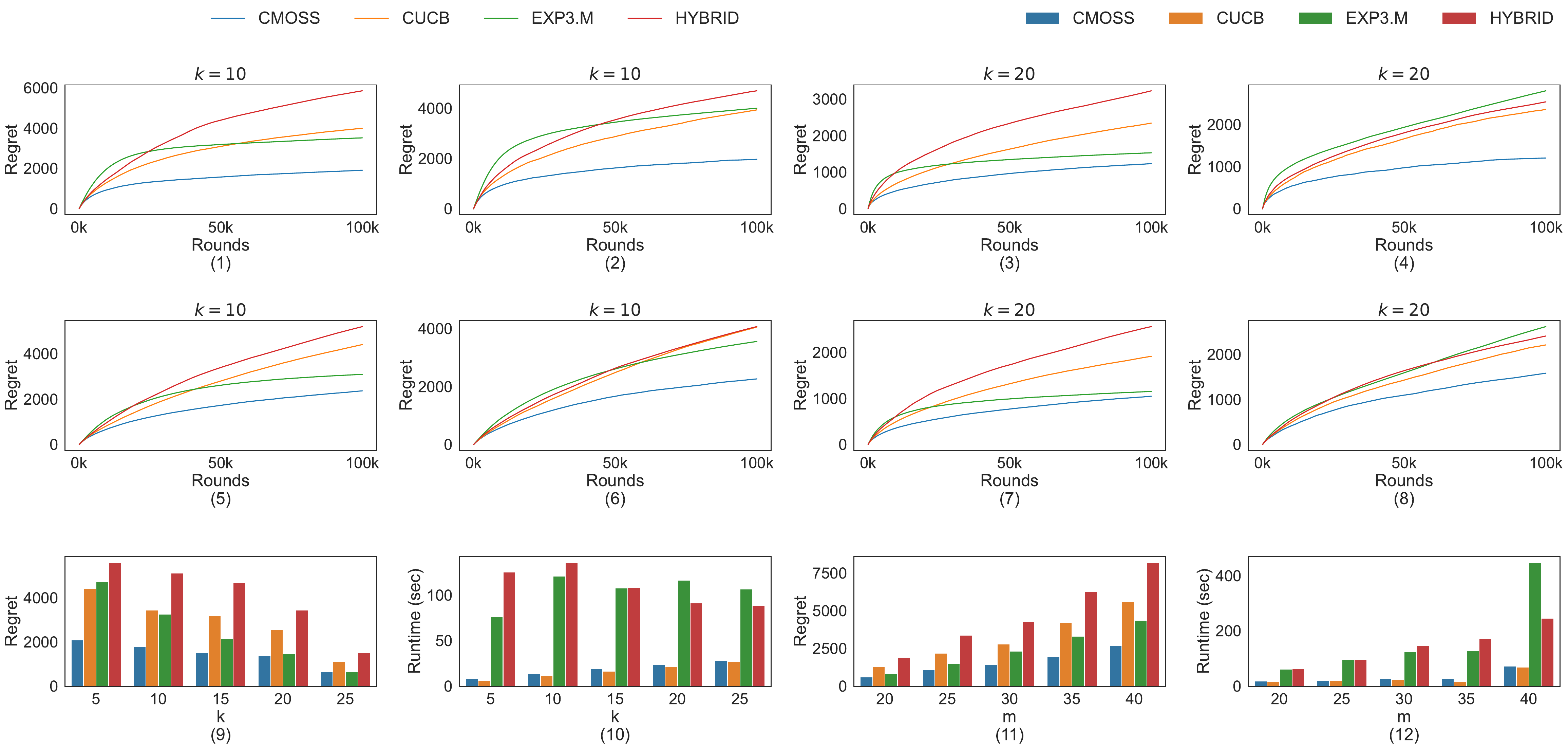}
   \caption{Comparison of CMOSS (blue) with three baselines algorithms under semi-bandit feedback.  
Subplots (1)-(4), (9)-(12) use synthetic dataset; (5)-(8) use Yelp dataset. Subplots (1)-(8) show cumulative regret with \(k=10, m=30\) and \(k=20, m=30\); (9)(10) show ablation studies varying \(k\) (fixed $m=30$), while (11)(12) varying \(m\) (fixed $k=15$).  Initial means of base arms fall within the range \([0, 0.1]\), except for (2)(4)(6)(8), which use the range \([0.3,0.4]\). }
    \label{fig:datasets}
\end{figure*}

In this section, we evaluate CMOSS on both synthetic and real-world datasets under semi-bandit feedback, benchmarking against three representative baselines: CUCB, EXP3.M, and HYBRID. In addition, we conduct ablation studies to investigate the influence of key parameters and assess the robustness of CMOSS under various configurations. All results are averaged over 10 independent runs of $100000$ rounds, including those in \Cref{append:experiments}. Further experimental details and additional ablation results are provided in \Cref{append:experiments}. Visual comparisons across all experiments are summarized in \Cref{fig:datasets}.

For all experiments, we fix the confidence parameter of CMOSS to \(\delta = 0.00001\), which is of the same order of magnitude as the theoretical results $\frac{m(\log k)^2}{kT}, k\leq \frac{m}{2}$ and $\frac{(m-k)^2\log k\log (m-k)}{k^2T}, k>\frac{m}{2}$ (see \Cref{sec:cmoss_algo} and \Cref{append:cmoss_proof}). For EXP3.M, we use a mixing coefficient \(\gamma = 0.01\), as it yields favorable empirical performance in our settings (see \Cref{append:exp3m_gamma}). All rewards follow Bernoulli distributions: pulling arm $\arm_i$ yields reward $1$ with probability \(\mu_i\), and $0$ otherwise, where \(\mu_i\) denotes the true mean (interpretable as the latent click-through probability in recommendation tasks).

\subsection{Comparison with baselines}\label{sec:comparison_with_baselines}
We evaluate CMOSS across both synthetic and real-world bandit environments to assess its effectiveness under diverse reward structures. (1)-(8) of \Cref{fig:datasets} report the cumulative regret over time with cardinality \(k=10\) and \(k=20\) and fixed number of base arms \(m=30\).

In the synthetic setting, the initial click probability \(\mu_i\) for each arm \(\arm_i\) is independently drawn from a uniform distribution. We consider two regimes: a low-probability regime with \(\mu_i \sim \mathtt{U}(0, 0.1)\), and a high-probability regime with \(\mu_i \sim \mathtt{U}(0.3,0.4)\). To further assess the practical effectiveness of our method, we evaluate it on the public Yelp dataset\footnote{\url{http://www.yelp.com/dataset_challenge}}, a classical CMAB real-world dataset. Following \cite{LWK19}, we map user–item affinities to the target regimes \([0, 0.1]\) and \([0.3,0.4]\) via linear rescaling. Full details are provided in \Cref{append_yelp}. The resulting values serve as the initial click probabilities of base arms in the two settings. For each regime, we independently sample $30$ arms.

The low-probability regime \([0, 0.1]\) reflects typical conditions observed in real-world recommendation scenarios, where user interactions such as clicks occur relatively infrequently. This setting provides a realistic environment for evaluating algorithm performance. In contrast, the high-probability regime \([0.3,0.4]\) represents a more idealized scenario with higher click probabilities, allowing us to assess algorithm behavior in more optimistic conditions. By examining both regimes, we gain complementary insights into the robustness and effectiveness of the algorithms across a range of practical and theoretical contexts.

Across all experimental settings in \Cref{tab:formal}, CMOSS consistently achieves the lowest cumulative regret among all baselines under semi-bandit feedback. Under regime $[0,0.1]$, CMOSS reduces regret—relative to CUCB—by \textbf{52.1\%} ($k=10$) and \textbf{47.3\%} ($k=20$) on the synthetic dataset, and by \textbf{46.3\%} ($k=10$) and \textbf{45.2\%} ($k=20$) on the Yelp dataset. 
Under regime $[0.3,0.4]$, CMOSS maintains similarly strong improvements: \textbf{49.9\%} ($k=10$) and \textbf{48.9\%} ($k=20$) on the synthetic dataset, and \textbf{44.1\%} ($k=10$) and \textbf{28.4\%} ($k=20$) on the Yelp dataset. In terms of computational efficiency, CMOSS is slightly slower than CUCB across all configurations, with runtime ratios ranging from \textbf{1.12$\times$} to \textbf{1.15$\times$} at $k=10$ and from \textbf{1.10$\times$} to \textbf{1.17$\times$} at $k=20$. Despite this mild overhead, CMOSS remains \textbf{4--9$\times$} faster than EXP3.M and \textbf{5--10$\times$} faster than HYBRID across every setting, highlighting substantial efficiency advantages over exploration-heavy or hybrid baselines. Overall, these results demonstrate that CMOSS offers a strong balance between regret minimization and runtime efficiency, achieving the best performance across both low- and high-reward regimes and for both $k=10$ and $k=20$ configurations.

\subsection{Ablation studies}\label{sec:ablation}
To further evaluate the robustness and scalability of CMOSS, we conduct ablation studies by varying cardinality \(k\) while fixing base arm size at \(m=30\) and varying base arm size $m$ while fixing cardinality $k=15$ on synthetic dataset. As shown (3)(4) (varying $k$) and (7)(8) (varying $m$) of \Cref{fig:datasets}, we report both cumulative regret and total runtime. Detailed metrics are summarized in \Cref{tab:ablation_k_m}. We also do the same ablation studies on Yelp dataset (see details in \Cref{append_ablation}).

Across both synthetic and Yelp datasets, CMOSS consistently achieves the lowest cumulative regret when varying either $k$ or $m$. Relative to CUCB, CMOSS reduces regret by \textbf{40--55\%} in most configurations, while requiring only a \textbf{10--30\%} increase in runtime. When compared with EXP3.M and HYBRID, CMOSS offers significant improvements: regret is typically \textbf{25--70\%} lower, and runtime is reduced by more than \textbf{65--85\%} across nearly all settings. The ablation studies on the Yelp dataset reveal the same performance patterns as those on the synthetic dataset, confirming that CMOSS remains robust and efficient across different domains and parameter regimes.

In addition to the main experiments, we include supplementary studies in \Cref{append:experiments} to strengthen the completeness and rigor of our evaluation. We first perform a parameter selection study for EXP3.M (see \Cref{append:exp3m_gamma}) to validate the choice of its hyperparameter $\gamma=0.01$ in our setup. We then compare CMOSS and CUCB under a cascading feedback model (see \Cref{append:cascading}) to assess their performance under an alternative user interaction mechanism. In particular, CMOSS achieves a notable reduction in regret under cascading feedback, while incurring only a modest increase in runtime. These additional experiments support both the soundness of our experimental design and the robustness of CMOSS across varied settings.

These results indicate that CMOSS consistently outperforms benchmark algorithms in both regret and runtime efficiency across different datasets and varying values of $k$ and $m$, demonstrating robust effectiveness. This consistent performance highlights CMOSS’s suitability for practical recommendation and ranking tasks, demonstrating its effectiveness and reliability across varying selection sizes.

\section{Conclusion and Future Work}
In this paper, we study the stochastic CMAB problem and propose the CMOSS algorithm, which is not only computationally efficient but also has a regret of $O\big( (\log k)\sqrt{kmT}\big )$ when $k\leq \frac{m}{2}$ and $O\big((m-k)\sqrt{\log k\log(m-k)T}\big )$ when $k>\frac{m}{2}$ under semi-bandit feedback. We then extend our analysis to cascading feedback. Comparative experiments with CUCB, EXP3.M and HYBRID under semi-bandit feedback show that CMOSS has at least one-way significant advantages in cumulative regret and computation cost over the other three algorithms, with the other one showing small differences. 

Several promising directions for future research emerge from this work. First, tightening the $\frac{1}{p^*}$ factor under cascading feedback and other scenarios which contain probabilistically triggered arms, could yield a more general framework and facilitate the analysis of a broader range of problems including episodic reinforcement learning \cite{LWZ24}. Furthermore, recent refinements to the UCB algorithm \cite{L15, MNSR18}, were shown to empirically outperform MOSS in terms of regret. Adapting these more sophisticated UCB variants to the CMAB setting presents a natural and compelling extension.



\bibliography{arxiv}


\newpage
\appendix   
\onecolumn

\section{Techinical Proofs of \Cref{sec:cmoss}} 

\subsection{Proofs of \Cref{sec:cmoss_proof}} \label{append:cmoss_proof}

For completeness and consistency of the paper, we first provide proofs of some basic lemmas (\Cref{lem:general_Hoef}, \Cref{lem:moss-Delta}, \Cref{lem:moss-deviate}) that will be used later.

\begin{lemma}[Hoeffding's lemma, \cite{MU17}]\label{lem:Hoef}
     Let $X$ be a random variable with mean $0$ and $a\leq X\leq b$ holds almost surely. Then for any $\lambda\in \bb R$, 
     \[
        \E{e^{\lambda X}}\leq e^{\frac{\lambda^2(b-a)^2}{8}}.
     \]
\end{lemma}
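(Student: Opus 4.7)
The plan is to proceed along the standard two-step route: first use convexity of the exponential to linearize $e^{\lambda X}$ in $X$, then control the resulting cumulant generating function by a second-order Taylor argument.

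First, I would exploit the fact that for any $x \in [a,b]$, $x$ can be written as the convex combination $x = \frac{b-x}{b-a}\, a + \frac{x-a}{b-a}\, b$. Convexity of $e^{\lambda \cdot}$ then gives the pointwise bound $e^{\lambda x} \le \frac{b-x}{b-a} e^{\lambda a} + \frac{x-a}{b-a} e^{\lambda b}$. Taking expectations and using $\E{X} = 0$ eliminates the cross terms, yielding
\[
\E{e^{\lambda X}} \le \frac{b}{b-a} e^{\lambda a} - \frac{a}{b-a} e^{\lambda b}.
\]

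Next, the idea is to show that the right-hand side is itself bounded by $e^{\lambda^2(b-a)^2/8}$. I would set $p = -a/(b-a)$ and $u = \lambda(b-a)$ and rewrite the bound as $e^{-pu}\bigl(1 - p + p e^{u}\bigr)$. Taking logarithms, define $\phi(u) = -pu + \ln(1 - p + p e^u)$; the goal becomes showing $\phi(u) \le u^2/8$ for all $u \in \mathbb{R}$. A direct computation gives $\phi(0) = 0$ and $\phi'(0) = 0$, so by Taylor's theorem with remainder $\phi(u) = \tfrac{1}{2} u^2 \phi''(\xi)$ for some $\xi$ between $0$ and $u$. The second derivative is $\phi''(u) = \frac{p e^u (1-p)}{(1-p + p e^u)^2}$, which has the form $q(1-q)$ with $q = \frac{p e^u}{1 - p + p e^u} \in (0,1)$. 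Since $q(1-q) \le 1/4$, we obtain $\phi(u) \le u^2/8$, and substituting back $u = \lambda(b-a)$ yields the claim.

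The only mildly delicate step is the uniform bound $\phi''(u) \le 1/4$; everything else is bookkeeping. There are no edge cases to worry about beyond the trivial $\lambda = 0$ (both sides equal $1$) and the degenerate $a = b$ (then $X \equiv 0$ a.s. and the inequality is trivial), so I would mention these briefly and then present the two-line convexity bound followed by the Taylor argument above.
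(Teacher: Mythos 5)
Your proof is correct. Note that the paper itself gives no proof of this lemma: it is stated as a quoted result from the cited reference and used as a black box inside the proof of Lemma \ref{lem:general_Hoef}, so there is no in-paper argument to compare against. Your route --- the convex-combination bound $e^{\lambda x}\le \frac{b-x}{b-a}e^{\lambda a}+\frac{x-a}{b-a}e^{\lambda b}$ followed by the second-order Taylor estimate $\phi(u)\le u^2/8$ via $\phi''(u)=q(1-q)\le 1/4$ --- is the standard textbook proof and is complete. The only cosmetic remark: the claim $q\in(0,1)$ implicitly assumes $p\in(0,1)$; the boundary cases $p=0$ and $p=1$ (i.e., $a=0$ or $b=0$) force $X\equiv 0$ by the mean-zero assumption and are as trivial as the $a=b$ case you already flag, so nothing is lost.
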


\begin{lemma}[Generalization of Hoeffding’s inequality, \cite{MU17}]\label{lem:general_Hoef}
    Let $(Y_t)_{t=1}^n$ be a sequence of independent variables in $[0,1]$ with means $(q_t)_{t=1}^n$. let $X_t=Y_t-q_t$ and $S_t=\sum_{s=1}^t X_s$. Then for any $\eps>0$,
    \[
        \Pr{\max_{t\in[n]}S_t\geq \eps}\leq e^{-\frac{2\eps^2}{n}}.
    \]
\end{lemma}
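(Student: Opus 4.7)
The plan is to reduce the maximal deviation bound to an ordinary Chernoff bound by combining Doob's maximal inequality for nonnegative submartingales with Hoeffding's lemma (Lemma \ref{lem:Hoef}). Since the $Y_t$ are independent and $X_t = Y_t - q_t$ has mean zero, the partial sums $S_t = \sum_{s=1}^t X_s$ form a martingale with respect to the natural filtration. For any $\lambda > 0$, applying the exponential function (which is convex) then gives a nonnegative submartingale $(e^{\lambda S_t})_{t=0}^n$. This is exactly the structure Doob's inequality is designed for.

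First I would invoke Doob's maximal inequality on $(e^{\lambda S_t})_{t=0}^n$ to obtain, for every $\lambda > 0$,
\[
    \Pr{\max_{t \in [n]} S_t \geq \eps} = \Pr{\max_{t \in [n]} e^{\lambda S_t} \geq e^{\lambda \eps}} \leq e^{-\lambda \eps}\, \E{e^{\lambda S_n}}.
\]
Next I would control the MGF on the right-hand side. Each $X_t$ has mean zero and satisfies $-q_t \leq X_t \leq 1 - q_t$, so its range has length at most $1$, and Lemma \ref{lem:Hoef} yields $\E{e^{\lambda X_t}} \leq e^{\lambda^2/8}$. By independence of the $Y_t$ (hence of the $X_t$), the MGFs multiply, giving
\[
    \E{e^{\lambda S_n}} = \prod_{t=1}^n \E{e^{\lambda X_t}} \leq e^{n\lambda^2/8}.
\]

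Finally I would optimize over $\lambda$: the exponent $n\lambda^2/8 - \lambda \eps$ is minimized at $\lambda^\star = 4\eps/n$, producing the claimed bound $e^{-2\eps^2/n}$. I do not expect a substantive obstacle here; the only point that warrants a brief verification is the applicability of Doob's inequality, which requires $(e^{\lambda S_t})$ to be a submartingale, a routine consequence of the martingale property of $S_t$ and Jensen's inequality applied to the convex function $x \mapsto e^{\lambda x}$. Everything else is algebra and an invocation of the already-stated Lemma \ref{lem:Hoef}.
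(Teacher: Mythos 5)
Your proposal is correct and follows essentially the same route as the paper's proof: establish that $(e^{\lambda S_t})$ is a nonnegative submartingale via conditional Jensen's inequality, apply Doob's maximal inequality, bound the moment generating function by Hoeffding's lemma using independence, and optimize at $\lambda = 4\eps/n$. No substantive differences.
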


\begin{proof}
First we show that $(e^{\lambda S_t})^n_{t=1}$ is a submartingale for any $\lambda \in \bb{R}$. Let $\+F_t=\sigma\tp{X_1,\dots,X_t}$. Note that $e^{\lambda x}$ is a convex function for all $\lambda \in \bb{R}$. Applying conditional Jensen's inequality, for all $t \ge 1$:
\[
    \E{e^{\lambda S_t}\vert \+F_{t-1}} \ge e^{\lambda \E{S_t\vert \+F_{t-1}}} = e^{\lambda S_{t-1}}.
\]
Therefore $(e^{\lambda S_t})^n_{t=1}$ is a submartingale with regard to $\set{\+F_t}_{t\in[N]}$. From the Doob's submartingale inequality, for any $\lambda > 0$, 
\[
    \Pr{\underset{t\in [n]}{\max}S_t\ge\epsilon} 
    =\Pr{\underset{t\in [n]}{\max}e^{\lambda S_t}\ge e^{\lambda\epsilon}} 
    \leq \frac{\E{e^{\lambda S_n}}}{e^{\lambda\epsilon}}.
\]
According to \Cref{lem:Hoef}, $\E{e^{\lambda X_t}}\leq e^{\frac{\lambda^2(b-a)^2}{8}} = e^{\frac{\lambda^2}{8}}$, then we have
\[
    \Pr{\underset{t\in [n]}{\max}S_t\ge\epsilon}
    \leq \frac{\E{e^{\lambda S_n}}}{e^{\lambda\epsilon}}
    = \frac{\prod_{i=1}^{n}\E{e^{\lambda X_i}} }{e^{\lambda\epsilon}}
    \leq e^{\frac{n\lambda^2}{8}-\epsilon\lambda}.
\]
Set $\lambda = \frac{4\eps}{n}$ to minimize $\frac{n\lambda^2}{8}-\epsilon\lambda$, we obtain 
\[
    \Pr{\underset{t\in [n]}{\max}S_t\ge\epsilon}
    \leq e^{-\frac{2\eps^2}{n}}.
\]
\end{proof}

\begin{lemma}[\cite{LS20}] \label{lem:moss-Delta}
    Define $\zeta_i = \max\tp{0, \mu_i - \min\limits_{t\le T} \tp{\hat{\mu}_{i,t} + \rho_{i,t}}}$. For any $\eps>0$,
    \[
        \Pr{\zeta_i\ge \eps}\le \frac{4\delta}{\eps^2}, \quad 
        \E{\zeta_i} \leq 4\sqrt{\delta}.
    \]
\end{lemma}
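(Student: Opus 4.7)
The plan is to bound the lower tail $\Pr{\zeta_i \ge \eps}$ by a peeling argument over dyadic blocks of the pull count, and then obtain the expectation bound by integrating the tail. Throughout, let $\hat\mu_i^{(s)}$ denote the empirical mean after $s$ pulls of arm $i$ and let $\rho_s = \sqrt{\tfrac{1}{s}\ln^+(\tfrac{1}{\delta s})}$, so that the event $\{\zeta_i\ge \eps\}$ is contained in
\[
    \bigcup_{s=1}^{T}\Bigl\{\hat\mu_i^{(s)} + \rho_s \le \mu_i - \eps\Bigr\}
    = \bigcup_{s=1}^{T}\Bigl\{\mu_i - \hat\mu_i^{(s)} \ge \eps + \rho_s\Bigr\}.
\]
Because the reward sequence $(X_i^{(k)})_k$ is i.i.d.\ in $[0,1]$ with mean $\mu_i$ and is independent of which rounds the arm is selected, we can apply the maximal Hoeffding inequality of Lemma \ref{lem:general_Hoef} to the partial sums $-S_s = \sum_{k=1}^{s}(\mu_i - X_i^{(k)})$ on any fixed time window.

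Next, partition $\{1,\dots,T\}$ into dyadic blocks $B_j = \{s : 2^{j}\le s < 2^{j+1}\}$ for $j=0,1,\dots,\lceil \log_2 T\rceil$. For $s\in B_j$, the required deviation $s(\eps+\rho_s)$ is at least $2^{j}\eps + \sqrt{2^{j}\ln^+(1/(\delta\cdot 2^{j+1}))}$. Applying Lemma \ref{lem:general_Hoef} on the window of length $2^{j+1}$ to $(1-X_i^{(k)})$, and using $(a+b)^2 \ge a^2+b^2$ for $a,b\ge 0$, each block contributes
\[
    \Pr{\exists\, s\in B_j:\mu_i - \hat\mu_i^{(s)} \ge \eps + \rho_s}
    \;\le\; \exp\!\Bigl(-2^{j}\eps^{2}\Bigr)\cdot\min\bigl\{1,\, 2\delta\cdot 2^{j}\bigr\}.
\]
Summing over $j$, the geometric/exponential tradeoff $2^{j}\delta\cdot e^{-2^{j}\eps^{2}}$ can be handled by comparing the sum to the integral $\int_{0}^{\infty} 2\delta\,e^{-u\eps^{2}}\,du = 2\delta/\eps^{2}$ (or by a direct case split at $j$ where $2^{j}\delta$ crosses $1$). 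After tracking constants this produces $\Pr{\zeta_i \ge \eps} \le 4\delta/\eps^{2}$.

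For the expectation bound, combine the tail estimate with the trivial bound $\zeta_i\le 1$ to get
\[
    \E{\zeta_i} = \int_{0}^{1}\Pr{\zeta_i \ge \eps}\,d\eps
    \;\le\; \int_{0}^{2\sqrt{\delta}} 1\,d\eps + \int_{2\sqrt{\delta}}^{1}\frac{4\delta}{\eps^{2}}\,d\eps
    \;\le\; 2\sqrt{\delta} + 2\sqrt{\delta} = 4\sqrt{\delta},
\]
which yields the stated inequality.

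The main technical obstacle is the peeling step: one must choose the dyadic blocks so that the logarithmic term inside $\rho_s$ is large enough to cancel the block-size factor $2^{j}$, but not so large that the exponent $-2\alpha^{2}/2^{j+1}$ is weakened. Verifying that the resulting per-block bound multiplies cleanly with the truncation $\min\{1, 2\delta\cdot 2^{j}\}$ and that the geometric sum matches the factor $4$ in $4\delta/\eps^{2}$ is where care with constants is needed; everything else is a routine application of maximal Hoeffding.
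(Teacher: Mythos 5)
Your proposal is correct and follows essentially the same route as the paper's proof: the same reduction of $\{\zeta_i\ge\eps\}$ to a union over pull counts, the same dyadic peeling combined with the maximal Hoeffding inequality of Lemma \ref{lem:general_Hoef} and the elementary bound $(a+b)^2\ge a^2+b^2$ to extract the factor $\delta\cdot 2^{j}e^{-c\,2^{j}\eps^2}$ per block, the same sum-to-integral comparison for the tail, and the same truncated integration of the tail (split at $2\sqrt{\delta}$) for the expectation. The only differences are an immaterial shift in the block indexing and slightly looser constant tracking, which the paper resolves the same way you indicate.
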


\begin{proof}
$\Pr{\zeta_i \ge \eps} = 0$ when $\epsilon \ge \mu_i$. Therefore we only need to prove the case when $0<\epsilon<\mu_i$.
First we have:
\[
    \Pr{\zeta_i\ge\epsilon} \leq \Pr{\max_{s\le T}\tp{\mu_i-\tp{\hat{\mu}_{i,s}+\sqrt{\frac{1}{s}\ln^ + (\frac{1}{\delta s}})}} \ge \eps}.
\]
Without loss of generality, let $T = 2^L$, $S_t = \sum_{s=1}^t \tp{\mu_i-X_i^{(s)}}$. It's easy to verify that $(e^{\lambda S_t})^T_{t=1}$ is a submartingale for any $\lambda \in \bb{R}$. We have:
\begin{align}
    &\phantomeq \Pr{\max_{s\le T}\tp{\mu_i-\tp{\hat{\mu}_{i,s}+\sqrt{\frac{1}{s}\ln^ + (\frac{1}{\delta s}})}} \ge \eps} \nonumber\\
    &\le \sum_{j=1}^L\Pr{\max_{2^{j-1}\le s\le 2^j}s\tp{\mu_i-\hat{\mu}_{i,s}} \ge 2^{j-1}\eps+\sqrt{2^{j-1}\ln^+(\frac{1}{\delta\cdot2^{j}}})} \nonumber \\
    &\le \sum_{j=1}^L\Pr{\max_{1\le s\le 2^j}S_s  \ge 2^{j-1}\eps+\sqrt{2^{j-1}\ln^+(\frac{1}{\delta\cdot2^{j}}})} \nonumber \\
    &\le \sum_{j=1}^L\exp\tp{-\frac{\tp{2^{j-1}\eps+\sqrt{2^{j-1}\ln^+(\frac{1}{\delta\cdot2^{j}}})}^2}{2^{j-1}}} \label{line:hoef_inequality} \\
    &\le \delta \sum_{j=1}^L 2^j e^{-2^{j-1}\eps^2} \notag \\
    &\le \frac{\delta}{e\eps^2} + \delta \int_{0}^{+\infty}2^{x+1} e^{-2^x \eps^2}dx \label{line:unimodal} \\
    &= \frac{\delta}{e\eps^2} + \frac{2\delta}{\ln 2\cdot \eps^2 e^\eps} \leq \frac{4\delta}{\eps^2}, \notag
\end{align}
where \Cref{line:hoef_inequality} follows from \Cref{lem:general_Hoef}. And \Cref{line:unimodal} follows from
\[
    \sum_{x=a}^{b}f(x) \le \max_{x \in [a, b]}f(x) + \int_a^b f(x)dx
\]
when $f(x)$ is unimodal.
Therefore, we have $\Pr{\zeta_i \ge \eps}\le \frac{4\delta}{\eps^2}$. And
\begin{align*}
    \E{ \zeta_i} 
    = \int_0^1\Pr{\zeta_i \ge t}dt \le \int_0^1\min\set{1,\frac{4\delta}{t^2}} dt \leq 4\sqrt{\delta}.
\end{align*}

\end{proof}

\begin{lemma}[\cite{LS20}] \label{lem:moss-deviate}
    Let $X_1,\dots,X_n$  be a sequence of independent random variables in $[0,1]$. Let $\hat\mu_t = \frac{1}{t}\sum_{s=1}^t X_s$. For a fixed $a>1$, $\eps\in (0,1)$, define 
    \[
        \kappa = \sum_{t=1}^T \1{\hat\mu_t + \sqrt{\frac{a}{t}} - \mu \ge \eps},
    \]
    then for any integer $T >1$, we have $\E{\kappa} \leq \frac{3a}{\eps^2}$.
\end{lemma}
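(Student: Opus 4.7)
\textbf{Proof plan for Lemma \ref{lem:moss-deviate}.} The overall strategy is the standard MOSS-style peeling: split the time axis at the threshold where the confidence radius $\sqrt{a/t}$ drops below $\eps$, bound the early part trivially, and control the late part through a Hoeffding/Gaussian tail estimate that can be compared to a convergent integral.

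First I would write $\E{\kappa}$ out by linearity of expectation as
\begin{equation*}
    \E{\kappa} \;=\; \sum_{t=1}^T \Pr{\hat\mu_t - \mu \ge \eps - \sqrt{a/t}}.
\end{equation*}
Set the threshold $t^* := \lceil a/\eps^2 \rceil$. For $t \le t^*$ the event inside the probability is vacuous (the right-hand side is non-positive) or not, and in either case we only use the trivial bound $1$. This gives a contribution of at most $t^* \le a/\eps^2 + 1$. For $t > t^*$ we have $\sqrt{a/t} < \eps$, so the right-hand side is strictly positive and Hoeffding's inequality (which follows from Lemma \ref{lem:general_Hoef} applied with $n=t$, $\eps' = t(\eps - \sqrt{a/t})$) yields
\begin{equation*}
    \Pr{\hat\mu_t - \mu \ge \eps - \sqrt{a/t}} \;\le\; \exp\!\Bigl(-2t\bigl(\eps - \sqrt{a/t}\bigr)^2\Bigr).
\end{equation*}

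Next I would bound the tail sum $\sum_{t > t^*}\exp(-2t(\eps-\sqrt{a/t})^2)$ by a convergent integral. The natural change of variable is $t = (a/\eps^2)\, s^2$ with $s \ge 1$, under which $2t(\eps-\sqrt{a/t})^2$ simplifies cleanly to $2a(s-1)^2$ and $dt = (2a/\eps^2)\, s\, ds$. Monotonicity arguments let me pass from the sum to $\int_1^\infty \exp(-2a(s-1)^2)\cdot (2a/\eps^2)\, s\, ds$. Splitting $s = (s-1)+1$ reduces the integral to the sum of $\int_0^\infty v\, e^{-2av^2}dv = 1/(4a)$ and $\int_0^\infty e^{-2av^2}dv = \tfrac12\sqrt{\pi/(2a)}$, giving a tail contribution of $\frac{1}{2\eps^2} + \frac{\sqrt{\pi a/2}}{\eps^2}$.

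Combining both parts, $\E{\kappa} \le \frac{a + 3/2 + \sqrt{\pi a/2}}{\eps^2}$ after absorbing the $+1$ into $1/\eps^2$ via $\eps \in (0,1)$. The final step is to verify $a + 3/2 + \sqrt{\pi a/2} \le 3a$ for all $a > 1$, which is a one-variable inequality easily checked (the gap $2a - 3/2 - \sqrt{\pi a/2}$ is increasing in $a$ and positive already at a small value slightly above $1$; if the constant in front of $a$ becomes borderline at the endpoint, I would simply tighten the threshold to $t^* = \lceil \alpha a/\eps^2\rceil$ with a slightly larger $\alpha$ so that the tail integral shrinks enough to absorb the constants). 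The main obstacle is really just the arithmetic in this last step: keeping the Gaussian-integral constant small enough that the total stays under $3a/\eps^2$ for all admissible $a>1$ and $\eps\in(0,1)$; all other steps are routine applications of the tools already introduced.
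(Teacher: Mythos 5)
Your proposal follows the same route as the paper's proof: split the sum at $t\approx a/\eps^2$, bound the early block trivially, apply Hoeffding to the tail, compare the tail sum to $\int e^{-2(\sqrt{t}\eps-\sqrt{a})^2}\,dt$, and evaluate that integral as $\tfrac{1}{2\eps^2}+\tfrac{\sqrt{\pi a/2}}{\eps^2}$ (your substitution $t=(a/\eps^2)s^2$ gives exactly the paper's value). However, the final step as you set it up does fail on part of the admissible range. By using $t^*=\lceil a/\eps^2\rceil\le a/\eps^2+1$ and then absorbing the $+1$ as $1/\eps^2$, you arrive at the requirement $a+\tfrac32+\sqrt{\pi a/2}\le 3a$, i.e. $2a\ge \tfrac32+\sqrt{\pi a/2}$. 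At $a=1$ the right side is $\tfrac32+\sqrt{\pi/2}\approx 2.75>2$, and the inequality only becomes true for $a\gtrsim 1.53$; so the bound is not established for $a\in(1,1.53)$, which the lemma must cover. Your fallback of enlarging the threshold to $\lceil \alpha a/\eps^2\rceil$ with $\alpha>1$ makes things worse, not better: it adds $(\alpha-1)a/\eps^2$ to the dominant early-block term while only shaving a lower-order amount off the tail integral, so it cannot rescue the constant near $a=1$.

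The fix is purely bookkeeping and is what the paper does: the early block contains only the integers $1\le t\le a/\eps^2$, of which there are $\lfloor a/\eps^2\rfloor\le a/\eps^2$ — no $+1$ is needed — and for every $t\ge\lfloor a/\eps^2\rfloor+1>a/\eps^2$ the deviation $t\eps-\sqrt{at}$ is already positive, so Hoeffding applies to the entire remaining sum. This yields $\E{\kappa}\le \tp{a+\tfrac12+\sqrt{\pi a/2}}/\eps^2$, and $2a\ge\tfrac12+\sqrt{\pi a/2}$ does hold for all $a\ge 1$ (at $a=1$ it reads $2\ge 1.76$). With that one correction your argument is complete and coincides with the paper's.
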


\begin{proof}
Denote $S_t = \sum_{s=1}^t(X_s-\mu)$, so $S_t$ is a martingale. We have:
\begin{align*}
    \E{\kappa} &= \sum_{t=1}^n \Pr{S_t \ge t\eps - \sqrt{at}} \\
    &= \sum_{t=1}^{\lfloor a/\eps^2 \rfloor}\tp{1-\Pr{S_t < t\eps - \sqrt{at}}} + \sum_{t=\lfloor a/\eps^2 \rfloor+1}^n \Pr{S_t \ge t\eps - \sqrt{at}}\\
    &\le \frac{a}{\eps^2} + \sum_{t=\lfloor a/\eps^2 \rfloor+1}^n \exp\tp{-\frac{2\tp{t\eps - \sqrt{at}}^2}{t}} \\
    &= \frac{a}{\eps^2} + \sum_{t=\lfloor a/\eps^2 \rfloor+1}^n e^{-2\tp{\sqrt{t}\eps - \sqrt{a}}^2} \\
    &\le  \frac{a}{\eps^2} + \int_{\frac{a}{\eps^2}}^{+\infty}e^{-2\tp{\sqrt{t}\eps - \sqrt{a}}^2}dt  \\
    &=  \frac{a}{\eps^2} + \frac{\sqrt{2\pi}}{2}\cdot\frac{\sqrt{a}}{\eps^2} + \frac{1}{2\eps^2} \leq \frac{3a}{\eps^2}.
\end{align*}
\end{proof}

We begin to give a thorough proof of \Cref{thm:cmoss_ub}. When \Cref{alg:cmoss} chooses $A_t$, for the arms in $A_t\setminus A^*$ and arms in $A^* \setminus A_t$, we have 
\begin{align*}
    \min\limits_{i\in A_t\setminus A^*} \tp{\hat\mu_{i,t} + \rho_{i,t}} 
    \ge \max\limits_{j\in A^*\setminus A_t} \tp{\hat\mu_{j,t} + \rho_{j,t}} \ge \frac{1}{\abs{A^*\setminus A_t}} \sum_{j\in  A^*\setminus A_t} \tp{\hat\mu_{j,t} + \rho_{j,t}}. 
\end{align*}
Then for every $i\in A_t\setminus A^*$, we have
\[
    \hat\mu_{i,t} + \rho_{i,t} 
    \ge \frac{1}{\abs{A^*\setminus A_t}} \min\limits_{s\leq T} \tp{\sum_{j\in A^*\setminus A_t} \tp{\hat\mu_{j,s} + \rho_{j,s}}}
    \ge \frac{1}{\abs{A^*\setminus A_t}} \sum_{j\in A^*\setminus A_t} \min\limits_{s\leq T} \tp{\hat{\mu}_{j,s}+ \rho_{j,s}}.
\]
According to the definition, $\zeta_j \ge \mu_j - \min\limits_{s\le T} \tp{\hat{\mu}_{j,s}+ \rho_{i,s}}, \forall j\in A^*$. Therefore
\begin{align*}
    \hat\mu_{i,t} + \rho_{i,t} 
    &\ge \frac{1}{\abs{A^*\setminus A_t}} \sum_{j\in A^*\setminus A_t} \tp{\mu_j - \zeta_j}  \ge \frac{1}{\abs{A^*\setminus A_t}} \tp{\sum_{j\in A^*\setminus A_t} \mu_j - \sum_{j\in A^*} \zeta_j} 
\end{align*}
Since $\Delta_{A_t} = \sum_{j\in A^*\setminus A_t} \mu_j - \sum_{i\in A_t\setminus A^*} \mu_i$ and $\Delta_{A_t} > 8\sum_{j\in A^*} \zeta_j$, then
\[
\hat\mu_{i,t} + \rho_{i,t} \ge \frac{1}{\abs{ A^*\setminus A_t}} \tp{\sum_{j\in A^*\setminus A_t} \mu_j - \sum_{j\in A^*} \zeta_j} \ge \frac{1}{\abs{ A^*\setminus A_t}} \tp{\sum_{i\in A_t\setminus A^*} \mu_i + \frac{7\Delta_{A_t}}{8}}. 
\]

Remember that we denote $\tp{\mu^A_{i}}_{i\in [k_A]}$ as the mean rewards of the arms in $A\setminus A^*$ where $\mu^A_{k_A} \ge \mu^A_{k_A-1} \ge \cdots \ge \mu^A_{1}$ and $\overline{\mu^A} = \frac{\sum_{i\in [k_A]}\mu^A_i}{k_A}$. Define $D_{A} = \set{i\in [k_A] \mid \mu^A_i \leq \overline{\mu^A} + \frac{3\Delta_{A}}{4k_A}}$ as the set of arms in $A\setminus A^*$ whose mean reward is no more than $\overline{\mu^A} + \frac{3\Delta_{A}}{4k_A}$. Then we have \Cref{lem:actionset_divide}:

\begin{lemma}[\Cref{lem:actionset_divide} restated]
    For every suboptimal action $A \in \+A$, if $\abs{D_{A}} \leq \lfloor\frac{k_A}{2} \rfloor$, then there exists $1\leq \ell_A \leq \lfloor \log k_A \rfloor$ such that 
    \[    
        \overline {\mu^A} - \mu^A_{\lfloor \frac{k_A}{2^{\ell_A}} \rfloor} \ge \frac{\tp{\sqrt{2}}^{\ell_A}}{16}\cdot \frac{\Delta_{A}}{k_A}.
    \]
\end{lemma}

\begin{proof}

Since $\abs{D_{A}} \leq \lfloor\frac{k_A}{2} \rfloor$, we have $\mu^A_{\lfloor\frac{k_A}{2} \rfloor + 1} > \overline {\mu^A} + \frac{3\Delta_A}{4k_A}$. From the definition of $\overline {\mu^A}$, we obtain
\begin{align} 
    \sum_{i=1}^{\lfloor\frac{k_A}{2} \rfloor} \tp{\overline {\mu^A} - \mu^A_i} 
    &= \sum_{i = \lfloor\frac{k_A}{2} \rfloor + 1}^{k_A} \tp{\mu^A_i - \overline {\mu^A}} \notag \\
    &\ge \sum_{i = \lfloor\frac{k_A}{2} \rfloor + 1}^{k_A} \tp{\mu^A_{\lfloor\frac{k_A}{2} \rfloor + 1} - \overline {\mu^A}} \notag \\
    &> \tp{k_A - \lfloor\frac{k_A}{2} \rfloor}\cdot \frac{3\Delta_A}{4k_A} \ge \frac{3\Delta_A}{8}. \label{line:big_sum}
\end{align}

We prove the lemma by contradiction, Assume that for all $1\leq \ell \leq \lfloor \log k_A \rfloor$, $\overline {\mu^A} - \mu^A_{\lfloor \frac{k_A}{2^\ell} \rfloor} < \frac{\tp{\sqrt{2}}^\ell}{16}\cdot \frac{\Delta_{A}}{k_A}$. So for all $\lfloor\frac{k_A}{2^{\ell+1}} \rfloor + 1 \leq i \leq \lfloor\frac{k_A}{2^{\ell}} \rfloor$, we have 
\[
\overline {\mu^A} - \mu^A_i \leq \overline {\mu^A} - \mu^A_{\lfloor\frac{k_A}{2^{\ell+1}} \rfloor} < \frac{\tp{\sqrt{2}}^{\ell+1}}{16}\cdot \frac{\Delta_{A}}{k_A}.
\]

Then we bound $\sum_{i=1}^{\lfloor\frac{k_A}{2} \rfloor} \tp{\overline {\mu^A} - \mu^A_i}$ as follows:
\begin{align*}
    \sum_{i=1}^{\lfloor\frac{k_A}{2} \rfloor} \tp{\overline {\mu^A} - \mu^A_i}
    &= \sum_{j=1}^{\lfloor \log k_A \rfloor} \sum_{i = \lfloor\frac{k_A}{2^{j+1}} \rfloor + 1}^{\lfloor\frac{k_A}{2^j} \rfloor} \tp{\overline {\mu^A} - \mu^A_i} \\
    &< \sum_{j=1}^{\lfloor \log k_A \rfloor} \tp{\lfloor\frac{k_A}{2^j} \rfloor - \lfloor\frac{k_A}{2^{j+1}} \rfloor} \cdot \frac{\tp{\sqrt{2}}^{j+1}}{16}\cdot \frac{\Delta_{A}}{k_A} \\
    &\leq \sum_{j=1}^{\lfloor \log k_A \rfloor}  \tp{\frac{k_A}{2^{j+1}} + \frac{1}{2}}\cdot  \frac{\tp{\sqrt{2}}^{j+1}}{16}\cdot \frac{\Delta_{A}}{k_A} \\
    &= \sum_{j=1}^{\lfloor \log k_A \rfloor} \frac{\Delta_A}{16\tp{\sqrt{2}}^{j+1}} + \sum_{j=1}^{\lfloor \log k_A \rfloor} \frac{\tp{\sqrt{2}}^{j+1} \Delta_A}{32k_A} \\
    &< \frac{\Delta_A}{8} + \frac{\tp{\sqrt{2} + 1} \Delta_A}{16 \sqrt{k_A}} < \frac{3\Delta_A}{8},
\end{align*}
which contradicts \Cref{line:big_sum}. 
\end{proof}

Now we divide the proof into two cases where $k\leq \frac{m}{2}$ and $k > \frac{m}{2}$.

\paragraph{The case where $k \leq \frac{m}{2}$}
We focus on proving the third term $\sum_{t=1}^T\E{\1{ A_t\in \+A_B} \Delta_{A_t}}$, where 
$$
\+A_B = \set{A\in \+A \mid \Delta_{A} > \max\set{8\sum_{j\in A^*}\zeta_j, (\log k)\sqrt{\frac{ekm}{T}}} }.
$$
From \Cref{lem:actionset_divide}, before the algorithm starts, we could divide the action set $\+A$ into $1+ \lfloor \log k \rfloor$ exhaustive and mutually exclusive sets $\tp{\+A_\ell}_{0\leq \ell \leq \lfloor \log k \rfloor}$ since $1\leq k_A\leq k$, where 
$$
\+A_0 = \set{A\in \+A \mid \abs{D_A} > \lfloor \frac{k_A}{2} \rfloor}, \+A_{\ell} = \set{A\in \+A \mid \abs{D_A} \leq \lfloor \frac{k_A}{2} \rfloor \land \ell_A = \ell}.
$$
Leveraging this classification of actions, we can bound the indicator for selecting a suboptimal action $A_t$ through this lemma:

\begin{lemma}[\Cref{lem:indicator_decompose} restated]
    For every $0\leq \ell \leq \lfloor \log k \rfloor$, we have
    \begin{align*} 
        &\phantomeq \1{A_t\in \+A_\ell \cap \+A_B} \leq \frac{2^{\ell+1}}{k_{A_t}} \sum_{i\notin A^*} \1{\tp{ \hat\mu_{i,t} + \rho_{i,t} \ge \mu_{i} + \frac{\tp{\sqrt{2}}^\ell}{16}\cdot \frac{\Delta_{A_t}}{k_{A_t}}} \land i\in A_t}.
    \end{align*}
\end{lemma}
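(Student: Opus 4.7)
My plan is to treat the indicator bound as a counting statement: if $A_t \in \+A_\ell \cap \+A_B$, then a large enough fraction of the arms in $A_t \setminus A^*$ individually satisfy the overestimation event in the sum, so that the prefactor $\frac{2^{\ell+1}}{k_{A_t}}$ suffices to push the right-hand side above $1$.

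First I would invoke the inequality already derived above the lemma, namely that on the event $\{A_t \in \+A_B\}$, every $i \in A_t \setminus A^*$ satisfies
\[
\hat\mu_{i,t} + \rho_{i,t} \;\ge\; \overline{\mu^{A_t}} \;+\; \frac{7\Delta_{A_t}}{8k_{A_t}}.
\]
This single inequality will be the workhorse; I just need to convert the additive gap $\frac{7\Delta_{A_t}}{8k_{A_t}}$ above $\overline{\mu^{A_t}}$ into a gap above $\mu_i$ for many $i$. The direction of conversion depends on how far $\mu_i$ lies below $\overline{\mu^{A_t}}$, and that is exactly what the classification into $(\+A_\ell)_{\ell=0}^{\lfloor \log k \rfloor}$ controls.

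For $\ell = 0$, I would use the set $D_{A_t}$: by definition each $i \in D_{A_t}$ obeys $\mu_i \le \overline{\mu^{A_t}} + \frac{3\Delta_{A_t}}{4k_{A_t}}$, so combining with the workhorse inequality gives $\hat\mu_{i,t}+\rho_{i,t} \ge \mu_i + \frac{\Delta_{A_t}}{8k_{A_t}} > \mu_i + \frac{1}{16}\cdot \frac{\Delta_{A_t}}{k_{A_t}}$, which is exactly the event indexed by $\ell=0$ in the target sum. Since $|D_{A_t}| > \lfloor k_{A_t}/2 \rfloor$, a short check over parities of $k_{A_t}$ gives $|D_{A_t}| \ge k_{A_t}/2$, and multiplying by $\frac{2^{0+1}}{k_{A_t}}=\frac{2}{k_{A_t}}$ yields at least $1$. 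For $\ell \ge 1$, I would instead use the $\lfloor k_{A_t}/2^\ell\rfloor$ smallest arms in $A_t\setminus A^*$: Lemma~\ref{lem:actionset_divide} forces their means to lie below $\overline{\mu^{A_t}} - \frac{(\sqrt{2})^\ell}{16}\cdot \frac{\Delta_{A_t}}{k_{A_t}}$, so the workhorse gives $\hat\mu_{i,t}+\rho_{i,t} \ge \mu_i + \frac{(\sqrt{2})^\ell}{16}\cdot \frac{\Delta_{A_t}}{k_{A_t}} + \frac{7\Delta_{A_t}}{8k_{A_t}}$, dominating the required threshold. Because $\ell \le \lfloor \log k_{A_t}\rfloor$ implies $k_{A_t}/2^\ell \ge 1$, I get $\lfloor k_{A_t}/2^\ell\rfloor \ge k_{A_t}/2^{\ell+1}$, and the prefactor $\frac{2^{\ell+1}}{k_{A_t}}$ again produces at least $1$.

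The only nontrivial piece is making sure the counting still works at the boundary: I need the floor inequality $\lfloor k_{A_t}/2^\ell\rfloor \ge k_{A_t}/2^{\ell+1}$, which follows from $k_{A_t}/2^\ell \ge 1$, and I need $|D_{A_t}| \ge k_{A_t}/2$ from the strict inequality $|D_{A_t}|>\lfloor k_{A_t}/2\rfloor$, which requires a brief case split on the parity of $k_{A_t}$. Otherwise the argument is just bookkeeping: on $\{A_t\in\+A_\ell\cap\+A_B\}$ the right-hand side of the lemma is at least $1$ term-by-term through the identified arms, and on the complement the left-hand side is $0$, so the inequality holds pointwise.
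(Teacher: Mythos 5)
Your proposal is correct and follows essentially the same route as the paper's proof: both use the derived bound $\hat\mu_{i,t}+\rho_{i,t}\ge \overline{\mu^{A_t}}+\frac{7\Delta_{A_t}}{8k_{A_t}}$ for all $i\in A_t\setminus A^*$ on $\{A_t\in\+A_B\}$, then count at least $\lfloor k_{A_t}/2\rfloor+1$ qualifying arms via $D_{A_t}$ when $\ell=0$ and at least $\lfloor k_{A_t}/2^{\ell}\rfloor$ low-mean arms via Lemma~\ref{lem:actionset_divide} when $\ell\ge 1$, closing with the floor bound $\lfloor k_{A_t}/2^{\ell}\rfloor\ge k_{A_t}/2^{\ell+1}$. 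The parity check and the pointwise framing you add are minor bookkeeping that the paper leaves implicit.
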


\begin{proof}
By the definition of $D_A$, we know that for every $A_t\in \+A_0$, there are at least $\lfloor \frac{k_{A_t}}{2} \rfloor + 1$ arms ($\arm_i$ e.g.) in $A_t\setminus A^*$ satisfies
$$
\hat\mu_{i,t} + \rho_{i,t} \ge \overline {\mu^{A_t}} + \frac{7\Delta_{A_t}}{8k_{A_t}} \ge \mu_i + \frac{\Delta_{A_t}}{8k_{A_t}} \ge \mu_i + \frac{\Delta_{A_t}}{16k_{A_t}}.
$$
Therefore we have
$$
    \1{A_t\in \+A_0 \cap \+A_B} \leq \frac{2}{k_{A_t}} \sum_{i\notin A^*} \1{\tp{\hat\mu_{i,t} + \rho_{i,t} \ge \mu_i + \frac{\Delta_{A_t}}{16k_{A_t}}} \land i\in A_t}.
$$
For every $A_t \in \+A_\ell, 1\leq \ell \leq \lfloor \log k \rfloor$, there are at least $\lfloor \frac{k_{A_t}}{2^{\ell}} \rfloor$ arms ($\arm_i$ e.g.) in $A_t \setminus A^*$ satisfies
$$
\hat\mu_{i,t} + \rho_{i,t} \ge \overline {\mu^{A_t}} + \frac{7\Delta_{A_t}}{8k_{A_t}} \ge \mu_i + \frac{\tp{\sqrt{2}}^\ell}{16}\cdot \frac{\Delta_{A_t}}{k_{A_t}}.
$$
Similarly we have
\begin{align*}
    \1{A_t\in \+A_\ell \cap \+A_B}
    &\leq \frac{1}{\lfloor \frac{k_{A_t}}{2^\ell} \rfloor} \sum_{i\notin A^*} \1{\tp{\hat\mu_{i,t} + \rho_{i,t} \ge \mu_{i} + \frac{\tp{\sqrt{2}}^\ell}{16}\cdot \frac{\Delta_{A_t}}{k_{A_t}}} \land i\in A_t} \\
    &\leq \frac{2^{\ell+1}}{k_{A_t}} \sum_{i\notin A^*} \1{\tp{\hat\mu_{i,t} + \rho_{i,t} \ge \mu_{i} + \frac{\tp{\sqrt{2}}^\ell}{16}\cdot \frac{\Delta_{A_t}}{k_{A_t}}} \land i\in A_t}.
\end{align*}
In summary, for every $0\leq \ell \leq \lfloor \log k \rfloor$, 
\begin{align*} 
    \1{A_t\in \+A_\ell \cap \+A_B} \leq \frac{2^{\ell+1}}{k_{A_t}} \sum_{i\notin A^*} \1{\tp{\hat\mu_{i,t} + \rho_{i,t} \ge \mu_{i} + \frac{\tp{\sqrt{2}}^\ell}{16}\cdot \frac{\Delta_{A_t}}{k_{A_t}}} \land i\in A_t}.
\end{align*}
\end{proof}

\Cref{lem:indicator_decompose} quantitatively decouples the constrains for $A_t$ into every arm in $A_t\setminus A^*$. Next we give the actions in every $\+A_\ell$ a regret upper bound. That is, for every $0\leq \ell \leq \lfloor \log k \rfloor$, we want to prove 
\[
    \sum_{t=1}^T \E{\1{A_t\in \+A_\ell \cap \+A_B}\cdot \Delta_{A_t}} = O\tp{\sqrt{kmT}}.
\]

Recall that each $\arm_i$ is contained in $N_{i,\ell}$ suboptimal actions $A_{i,\ell,1},\cdots, A_{i,\ell,N_{i,\ell}}$ such that $A_{i,\ell,n} \in \+A_\ell, \forall n\in [N_{i,\ell}]$. Denote $\Delta_{A_{i,\ell,n}}$ as $\Delta_{i,\ell,n}$ and $k_{A_{i,\ell,n}}$ as $k_{i,\ell,n}$. Without loss of generality, suppose that $\frac{\Delta_{i,\ell,N_{i,\ell}}}{k_{i,\ell,N_{i,\ell}}} \ge \frac{\Delta_{i,\ell,N_{i,\ell} - 1}}{k_{i,\ell,N_{i,\ell} - 1}} \ge \cdots \ge \frac{\Delta_{i,\ell,1}}{k_{i,\ell,1}}$. Then we have:

\begin{align}
    &\phantomeq \sum_{t=1}^T\E{\1{A_t\in \+A_\ell \cap \+A_B}\cdot \Delta_{A_t}} \notag \\
    &\leq \sum_{t=1}^T\E{\sum_{i\notin A^*} \1{\hat\mu_{i,t} + \rho_{i,t} \ge \mu_i + \frac{(\sqrt{2})^\ell}{16}\cdot \frac{\Delta_{A_t}}{k_{A_t}}}\cdot \frac{2^{\ell+1}\Delta_{A_t}}{k_{A_t}} \mid i\in A_t} \notag \\ 
    &= \sum_{i\notin A^*} \sum_{t=1}^T \sum_{n=1}^{N_{i,\ell}} \E{\1{A_t = A_{i,\ell,n}} \cdot \1{\hat\mu_{i,t} + \rho_{i,t} \ge \mu_i + \frac{(\sqrt{2})^\ell}{16}\cdot \frac{\Delta_{i,\ell,n}}{k_{i,\ell,n}} }\cdot \frac{2^{\ell+1}\Delta_{i,\ell,n}}{k_{i,\ell,n}} \mid i\in A_t} \notag \\ 
    &= 2^{\ell+1}\sum_{i\notin A^*} \E{\sum_{t=1}^T \sum_{n=1}^{N_{i,\ell}} \1{A_t = A_{i,\ell,n}} \cdot \1{\hat\mu_{i,t} + \rho_{i,t} \ge \mu_i + \frac{(\sqrt{2})^\ell}{16}\cdot \frac{\Delta_{i,\ell,n}}{k_{i,\ell,n}}} \cdot \frac{\Delta_{i,\ell,n}}{k_{i,\ell,n}} \mid i\in A_t}, \label{line:maximize_main}
\end{align}
where the first inequality is from \Cref{lem:indicator_decompose}. To avoid redundant computations for regret at \Cref{line:maximize_main}, we could arrange the sequence of actions that maximizes the value. Moreover, applying \Cref{lem:moss-deviate}, we have the following lemma:

\begin{lemma}[\Cref{lem:action_arrange} restated]
    For every $0\leq \ell \leq \lfloor \log k \rfloor$, we have
    \begin{align*}
        &\phantomeq \sum_{t=1}^T\E{\1{A_t\in \+A_\ell \cap \+A_B} \Delta_{A_t}} \\
        &\leq 1540 \sum_{i\notin A^*} \tp{\sum_{n=2}^{N_{i,\ell}} 
        \frac{k_{i,\ell,n}^2 \ln \tp{\frac{k^2}{\delta} \cdot \frac{\Delta_{i,\ell,n}^2}{k_{i,\ell,n}^2}} }{\Delta_{i,\ell,n}^2} \tp{\frac{\Delta_{i,\ell,n}}{k_{i,\ell,n}} - \frac{\Delta_{i,\ell,n-1}}{k_{i,\ell,n-1}}} + \frac{k_{i,\ell,1} \ln \tp{\frac{k^2}{\delta} \cdot \frac{\Delta_{i,\ell,1}^2}{k_{i,\ell,1}^2}} }{\Delta_{i,\ell,1}}  }.
    \end{align*}
\end{lemma}

\begin{proof}
Since $\frac{\Delta_{i,\ell,N_{i,\ell}}}{k_{i,\ell,N_{i,\ell}}} \ge \frac{\Delta_{i,\ell,N_{i,\ell} - 1}}{k_{i,\ell,N_{i,\ell} - 1}} \ge \cdots \ge \frac{\Delta_{i,\ell,1}}{k_{i,\ell,1}}$, when $\hat\mu_{i,t} + \rho_{i,t} - \mu_i \ge \frac{(\sqrt{2})^\ell}{16}\cdot \frac{\Delta_{i,\ell,N_{i,\ell}}}{k_{i,\ell,N_{i,\ell}}}$, we should choose $A_t = A_{i,\ell,N_{i,\ell}}$. And when $\frac{(\sqrt{2})^\ell}{16}\cdot \frac{\Delta_{i,\ell,N_{i,\ell-1}}}{k_{i,\ell,N_{i,\ell-1}}} \leq \hat\mu_{i,t} + \rho_{i,t} - \mu_i  < \frac{(\sqrt{2})^\ell}{16}\cdot \frac{\Delta_{i,\ell,N_{i,\ell}}}{k_{i,\ell,N_{i,\ell}}}$, we should choose $A_t = A_{i,\ell,N_{i,\ell}-1}$, and so on. By choosing $A_1,\cdots A_T$ to maximize the value of 
\[
    \sum_{t=1}^T \sum_{n=1}^{N_{i,\ell}} \1{A_t = A_{i,\ell,n}} \cdot \1{\hat\mu_{i,t} + \rho_{i,t} \ge \mu_i + \frac{(\sqrt{2})^\ell}{16}\cdot \frac{\Delta_{i,\ell,n}}{k_{i,\ell,n}}} \cdot \frac{\Delta_{i,\ell,n}}{k_{i,\ell,n}},
\]
we obtain that 
\begin{align}
    &\phantomeq \sum_{t=1}^T \sum_{n=1}^{N_{i,\ell}} \1{A_t = A_{i,\ell,n}} \cdot \1{\hat\mu_{i,t} + \rho_{i,t} \ge \mu_i + \frac{(\sqrt{2})^\ell}{16}\cdot \frac{\Delta_{i,\ell,n}}{k_{i,\ell,n}}} \cdot \frac{\Delta_{i,\ell,n}}{k_{i,\ell,n}} \notag \\
    &\leq \sum_{t=1}^T \bigg( \1{\hat\mu_{i,t} + \rho_{i,t} \ge \mu_i + \frac{(\sqrt{2})^\ell}{16}\cdot \frac{\Delta_{i,\ell,N_{i,\ell}}}{k_{i,\ell,N_{i,\ell}}}} \cdot \frac{\Delta_{i,\ell,N_{i,\ell}}}{k_{i,\ell,N_{i,\ell}}}   \notag \\
    &\phantomeq + \sum_{n=1}^{N_{i,\ell}-1} \1{\frac{(\sqrt{2})^\ell}{16}\cdot \frac{\Delta_{i,\ell,n}}{k_{i,\ell,n}} \leq \hat\mu_{i,t} + \rho_{i,t} - \mu_i < \frac{(\sqrt{2})^\ell}{16}\cdot \frac{\Delta_{i,\ell,n+1}}{k_{i,\ell,n+1}}} \frac{\Delta_{i,\ell,n}}{k_{i,\ell,n}} \bigg)  \notag \\
    &= \sum_{t=1}^T \bigg( \sum_{n=2}^{N_{i,\ell}} \1{\hat\mu_{i,t} + \rho_{i,t} \ge \mu_i + \frac{(\sqrt{2})^\ell}{16}\cdot \frac{\Delta_{i,\ell,n}}{k_{i,\ell,n}}} \tp{\frac{\Delta_{i,\ell,n}}{k_{i,\ell,n}} - \frac{\Delta_{i,\ell,n-1}}{k_{i,\ell,n-1}}} \notag \\
    &\phantomeq + \1{\hat\mu_{i,t} + \rho_{i,t} \ge \mu_i + \frac{(\sqrt{2})^\ell}{16}\cdot \frac{\Delta_{i,\ell,1}}{k_{i,\ell,1}}} \cdot \frac{\Delta_{i,\ell,1}}{k_{i,\ell,1}} \bigg), \label{line:indicator}
\end{align}
where \Cref{line:indicator} is due to $\1{x \leq y < z} = \1{y \ge x} - \1{y \ge z}$ for any $y$ if $x<z$. 

From \Cref{lem:moss-deviate}, for every $\Delta_{i,\ell,n}, n\in [N_{i,\ell}]$, we bound 
\[
\E{\sum_{t=1}^T \1{\hat\mu_{i,t} + \rho_{i,t} \ge \mu_i + \frac{(\sqrt{2})^\ell}{16}\cdot \frac{\Delta_{i,\ell,n}}{k_{i,\ell,n}}} \mid i\in A_t}
\] 
as follows:
\begin{align*}
    &\phantomeq \E{\sum_{t=1}^T \1{\hat\mu_{i,t} + \rho_{i,t} \ge \mu_i + \frac{(\sqrt{2})^\ell}{16}\cdot \frac{\Delta_{i,\ell,n}}{k_{i,\ell,n}}} \mid i\in A_t} \\
    &\leq \E{\sum_{T_{i,t}=1}^T \1{\frac{\sum_{s=1}^{T_{i,t}}X_i^{(s)}}{T_{i,t}} + \sqrt{\frac{\ln^+ \tp{\frac{1}{\delta T_{i,t}}}}{T_{i,t}}}  \ge \mu_i + \frac{(\sqrt{2})^\ell}{16}\cdot \frac{\Delta_{i,\ell,n}}{k_{i,\ell,n}} }} \\
    &\leq \frac{1}{\Delta_{i,\ell,n}^2} + \E{\sum_{T_{i,t} > \frac{1}{\Delta_{i,\ell,n}^2}}^T \1{ \frac{\sum_{s=1}^{T_{i,t}}X_i^{(s)}}{{T_{i,t}}} + \sqrt{\frac{\ln \tp{\frac{\Delta_{i,\ell,n}^2}{\delta}} }{T_{i,t}}} \ge \mu_i + \frac{(\sqrt{2})^\ell}{16}\cdot \frac{\Delta_{i,\ell,n}}{k_{i,\ell,n}} } } \\
    &\leq \frac{1}{\Delta_{i,\ell,n}^2} + \frac{768 k_{i,\ell,n}^2 \ln \tp{\frac{\Delta_{i,\ell,n}^2}{\delta}} }{2^\ell \Delta_{i,\ell,n}^2} \leq \frac{770 k_{i,\ell,n}^2 \ln \tp{\frac{\Delta_{i,\ell,n}^2}{\delta}} }{2^\ell \Delta_{i,\ell,n}^2}.
\end{align*}
Combining this with \Cref{line:maximize_main} and \Cref{line:indicator}, we have
\begin{align*}
    &\phantomeq \sum_{t=1}^T\E{\1{A_t\in \+A_\ell \cap \+A_B}\cdot \Delta_{A_t}} \\
    &\leq 2^{\ell+1}\sum_{i\notin A^*} \*E \Bigg[ \sum_{t=1}^T \sum_{n=1}^{N_{i,\ell}} \1{\hat\mu_{i,t} + \rho_{i,t} \ge \mu_i + \frac{(\sqrt{2})^\ell}{16}\cdot \frac{\Delta_{i,\ell,n}}{k_{i,\ell,n}}} \cdot \frac{\Delta_{i,\ell,n}}{k_{i,\ell,n}} \Bigg \vert \, i\in A_t \Bigg] \\
    &\leq 2^{\ell+1}\sum_{i\notin A^*} \*E \Bigg[ \sum_{t=1}^T \bigg( \sum_{n=2}^{N_{i,\ell}} \1{\hat\mu_{i,t} + \rho_{i,t} \ge \mu_i + \frac{(\sqrt{2})^\ell}{16}\cdot \frac{\Delta_{i,\ell,n}}{k_{i,\ell,n}}} \tp{\frac{\Delta_{i,\ell,n}}{k_{i,\ell,n}} - \frac{\Delta_{i,\ell,n-1}}{k_{i,\ell,n-1}}} \notag \\
    &\phantomeq \phantomeq  + \1{\hat\mu_{i,t} + \rho_{i,t} \ge \mu_i + \frac{(\sqrt{2})^\ell}{16}\cdot \frac{\Delta_{i,\ell,1}}{k_{i,\ell,1}}} \cdot \frac{\Delta_{i,\ell,1}}{k_{i,\ell,1}} \bigg) \Bigg \vert \, i\in A_t \Bigg] \\
    &= 2^{\ell+1}\sum_{i\notin A^*} \bigg( \sum_{n=2}^{N_{i,\ell}} \*E \Bigg[ \sum_{t=1}^T  \1{\hat\mu_{i,t} + \rho_{i,t} \ge \mu_i + \frac{(\sqrt{2})^\ell}{16}\cdot \frac{\Delta_{i,\ell,n}}{k_{i,\ell,n}}} 
    \Bigg \vert \, i\in A_t \Bigg]
    \tp{\frac{\Delta_{i,\ell,n}}{k_{i,\ell,n}} - \frac{\Delta_{i,\ell,n-1}}{k_{i,\ell,n-1}}} \notag \\
    &\phantomeq \phantomeq  + \*E \Bigg[\sum_{t=1}^T \1{\hat\mu_{i,t} + \rho_{i,t} \ge \mu_i + \frac{(\sqrt{2})^\ell}{16}\cdot \frac{\Delta_{i,\ell,1}}{k_{i,\ell,1}}} \Bigg \vert \, i\in A_t \Bigg] \cdot \frac{\Delta_{i,\ell,1}}{k_{i,\ell,1}} \bigg) \\
    &\leq 770\cdot 2^{\ell+1} \sum_{i\notin A^*} \tp{\sum_{n=2}^{N_{i,\ell}} 
    \frac{k_{i,\ell,n}^2 \ln \tp{\frac{\Delta_{i,\ell,n}^2}{\delta}} }{2^\ell \Delta_{i,\ell,n}^2} \tp{\frac{\Delta_{i,\ell,n}}{k_{i,\ell,n}} - \frac{\Delta_{i,\ell,n-1}}{k_{i,\ell,n-1}}} + \frac{k_{i,\ell,1}^2 \ln \tp{\frac{\Delta_{i,\ell,1}^2}{\delta}} }{2^\ell \Delta_{i,\ell,1}^2} \cdot \frac{\Delta_{i,\ell,1}}{k_{i,\ell,1}} } \\ 
    &\leq 1540 \sum_{i\notin A^*} \tp{\sum_{n=2}^{N_{i,\ell}} 
    \frac{k_{i,\ell,n}^2 \ln \tp{\frac{k^2}{\delta} \cdot \frac{\Delta_{i,\ell,n}^2}{k_{i,\ell,n}^2}} }{\Delta_{i,\ell,n}^2} \tp{\frac{\Delta_{i,\ell,n}}{k_{i,\ell,n}} - \frac{\Delta_{i,\ell,n-1}}{k_{i,\ell,n-1}}} + \frac{k_{i,\ell,1} \ln \tp{\frac{k^2}{\delta} \cdot \frac{\Delta_{i,\ell,1}^2}{k_{i,\ell,1}^2}} }{\Delta_{i,\ell,1}}  }.
\end{align*}
\end{proof}

Perceiving $\frac{\Delta_{i,\ell,n}}{k_{i,\ell,n}}$ as the variable $x$, we could bound the first term by the integral of $f(x) = \frac{\ln\tp{\frac{k^2 x^2}{\delta}}}{x^2}$ over $x > \frac{\Delta_{i,\ell,1}}{k_{i,\ell,1}} > (\log k)\sqrt{\frac{em}{kT}}$, while the second term is bounded by the maximum of $g(x) = \frac{\ln\tp{\frac{k^2 x^2}{\delta}}}{x}$ over $x > (\log k)\sqrt{\frac{em}{kT}}$. $f(x)$ has been plotted in \Cref{fig:function}. It is easy to verify that $f(x)$ is monotonically decreasing and positive when $x > \sqrt{\frac{e\delta}{k^2} } = (\log k)\sqrt{\frac{em}{k^3T}}$. Additionally we have $\int f(x) dx = \frac{\ln \frac{\delta}{e^2k^2} - 2\ln x}{x}$. Thus the summation could be bounded by the integral of $f(x)$ on $((\log k)\sqrt{\frac{em}{kT}},\infty)$. Also $g(x)$ is monotonically decreasing when $x > (\log k)\sqrt{\frac{em}{kT}}$, which gives an upper bound for the second term $\frac{k_{i,\ell,1} \ln \tp{\frac{k^2}{\delta} \cdot \frac{\Delta_{i,\ell,1}^2}{k_{i,\ell,1}^2}} }{\Delta_{i,\ell,1}}$. Thus we have
\begin{align*}
    \sum_{t=1}^T\E{\1{A_t\in \+A_\ell}\Delta_{A_t}}  
    &\leq 1540 \sum_{i\notin A^*} \tp{\int_{(\log k)\sqrt{\frac{em}{kT}}}^\infty f(x) dx + \frac{\ln ek^2}{(\log k)\sqrt{\frac{em}{kT}}} } \\ 
    &= 1540 \sum_{i\notin A^*} \tp{ \frac{\ln \frac{\delta}{e^2k^2} - 2\ln x}{x} \Big|_{(\log k)\sqrt{\frac{e m}{kT}}}^\infty + \frac{\ln ek^2}{\log k} \sqrt{\frac{kT}{e m}} } \\ 
    &= 1540\sum_{i\notin A^*} \tp{\frac{\ln e^4k^4}{\log k}\sqrt{\frac{k T}{em}}} = O\tp{\sqrt{kmT}}.
\end{align*}

\begin{figure}[H]
    \centering
    \begin{tikzpicture}[scale=0.8]
        \begin{axis}[
                axis lines = middle,
                xlabel = \(x\),
                ylabel = \(f(x)\),
                xmin = 0.25, xmax = 2,   
                ymin = -1, ymax = 4, 
                domain = 0.25:2,         
                samples = 200,           
                smooth,
                grid = both,             
                minor x tick num = 4,    
                minor y tick num = 4,   
                xticklabels = \empty,    
                yticklabels = \empty,   
                extra y ticks = {0},     
                width = 12cm,
                height = 8cm,
                restrict y to domain=-1000:1000, 
                xlabel style = {at={(ticklabel* cs:1)}, anchor=north west},
                ylabel style = {at={(ticklabel* cs:1)}, anchor=south west}
            ]

            \def\theta{0.1}

            \addplot[thick,black]{ln(x^2/\theta)/x^2};
            \addlegendentry{\( \frac{\ln \left( \frac{k^2x^2}{\delta} \right)}{x^2} \)}

            \pgfmathsetmacro\maxX{sqrt(e*\theta)} 
            \pgfmathsetmacro\maxY{ln(\maxX^2/\theta)/\maxX^2} 
            \draw[thick, dashed, color=black] (0.52,3.68) -- (0.52,0);
            \node at (0.52, -0.3) {$\sqrt{\frac{e\delta}{k^2}}$};
            \fill[black] (0.52,0) circle (2pt);

            \draw[thick, dashed, color=black] (0.72,3.18) -- (0.72,0);
            \node at (0.72, -0.3) {$\sqrt{e\delta}$};
            \fill[black] (0.72,0) circle (2pt);

            \pgfmathsetmacro\fA{ln(1.9^2/\theta)/1.9^2} 
            \pgfmathsetmacro\fB{ln(1.72^2/\theta)/1.72^2} 
            \pgfmathsetmacro\fC{ln(1.52^2/\theta)/1.52^2} 
            \pgfmathsetmacro\fD{ln(1.32^2/\theta)/1.32^2} 

            \pgfmathsetmacro\fE{ln(1.2^2/\theta)/1.2^2} 
            \pgfmathsetmacro\fF{ln(1.11^2/\theta)/1.11^2} 
    
            \pgfmathsetmacro\fG{ln(1^2/\theta)/1^2} 
            \pgfmathsetmacro\fH{ln(0.85^2/\theta)/0.85^2} 

            \node at (1.96, \fA/2) {$\bm{\cdots}$};
            
            \draw[thick, dashed, color=black] (1.9,\fA) -- (1.9,0);
            \node at (1.9, -0.3) {$\frac{\Delta_{i,\ell,n}}{k_{i,\ell,n}}$};
            \fill[black] (1.9,0) circle (2pt);

            \draw[thick, dashed, color=black] (1.72,\fB) -- (1.72,0);
            \node at (1.72, -0.3) {$\frac{\Delta_{i,\ell,n-1}}{k_{i,\ell,n-1}}$};
            \fill[black] (1.72,0) circle (2pt);

            \draw[pattern=north east lines, pattern color=black] (1.72,0) rectangle (1.9,\fA);

            \draw[thick, dashed, color=black] (1.52,\fC) -- (1.52,0);
            \node at (1.52, -0.3) {$\frac{\Delta_{i,\ell,n-2}}{k_{i,\ell,n-2}}$};
            \fill[black] (1.52,0) circle (2pt);

            \draw[pattern=north east lines, pattern color=black] (1.52,0) rectangle (1.72,\fB);

            \draw[thick, dashed, color=black] (1.32,\fD) -- (1.32,0);
            \node at (1.32, -0.3) {$\frac{\Delta_{i,\ell,n-3}}{k_{i,\ell,n-3}}$};
            \fill[black] (1.32,0) circle (2pt);

            \draw[pattern=north east lines, pattern color=black] (1.32,0) rectangle (1.52,\fC);

            \draw[thick, dashed, color=black] (1.2,\fE) -- (1.2,0);
            \draw[thick, dashed, color=black] (1.11,\fF) -- (1.11,0);
            \draw[pattern=north east lines, pattern color=black] (1,0) rectangle (1.11,\fF);
            \draw[pattern=north east lines, pattern color=black] (1.2,0) rectangle (1.32,\fD);
            \node at (1.16, \fF/2) {$\bm{\cdots}$};

            \draw[thick, dashed, color=black] (1,\fG) -- (1,0);
            \node at (1, -0.3) {$\frac{\Delta_{i,\ell,2}}{k_{i,\ell,2}}$};
            \fill[black] (1,0) circle (2pt);
            
            \draw[thick, dashed, color=black] (0.85,\fH) -- (0.85,0);
            \node at (0.85, -0.3) {$\frac{\Delta_{i,\ell,1}}{k_{i,\ell,1}}$};
            \fill[black] (0.85,0) circle (2pt);

            \draw[pattern=north east lines, pattern color=black] (0.85,0) rectangle (1,\fG);

        \end{axis}
    \end{tikzpicture}
\caption{The plot of $f(x) = \frac{\ln\tp{\frac{k^2 x^2}{\delta}}}{x^2}$.}
\label{fig:function}
\end{figure}
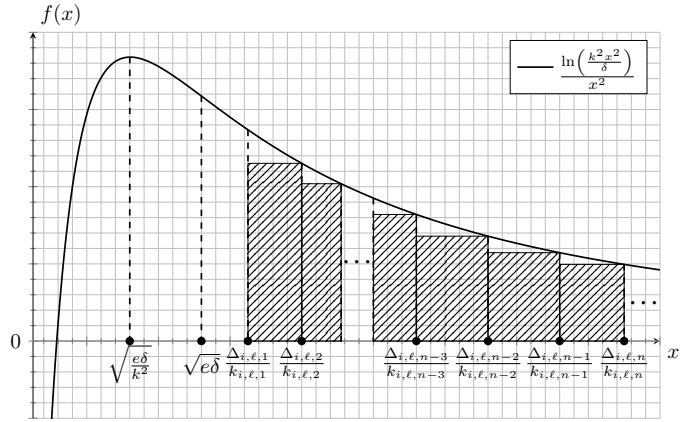

Therefore we have
\begin{align*}
    \sum_{t=1}^T\E{\1{A_t \in \+A_B} \Delta_{A_t}} 
    &= \sum_{\ell=0}^{\lfloor \log k \rfloor} \sum_{t=1}^T \E{\1{A_t\in \+A_\ell \cap \+A_B} \Delta_{A_t}} \\
    &= O\tp{(\log k)\sqrt{kmT}}.
\end{align*}

Bring this back to \Cref{line:decompose}, we could obtain $\E{R(T)} = \E{\sum_{t=1}^T \Delta_{A_t}} =  O\tp{(\log k)\sqrt{kmT}}$.

\paragraph{The case where $k > \frac{m}{2}$}
In this case, we need to prove that the regret of CMOSS is $O\tp{(m-k) \sqrt{\log k\log (m-k) T}}$. Here $\delta$ is optimized to be $\frac{(m-k)^2\log k \log(m-k)}{k^2 T}$. Now we let $\+A_B = \set{A\in \+A \mid \Delta_A > \max\set{8\sum_{j=1}^k \zeta_j,  (m-k)\sqrt{\frac{e\log k \log(m-k)}{T}} }}$. And we decompose the overall regret as follows:
\begin{align}
    \sum_{t=1}^T\E{\Delta_{A_t}} 
    &= \sum_{t=1}^T\E{\1{\Delta_{A_t}\leq (m-k)\sqrt{\frac{e\log k \log(m-k)}{T}} } \Delta_{A_t}} + \sum_{t=1}^T\E{\1{\Delta_{A_t}\leq 8\sum_{j=1}^k \zeta_j}\Delta_{A_t}} \notag \\
    &+ \sum_{t=1}^T\E{\1{\Delta_{A_t} > \max\set{8\sum_{j=1}^k \zeta_j,  (m-k)\sqrt{\frac{e\log k \log(m-k)}{T}} }} \Delta_{A_t}} \notag \\
    &\leq (m-k)\sqrt{e\log k \log(m-k)T} + 8\sum_{j=1}^k  \E{\zeta_j}\cdot T + \sum_{t=1}^T\E{\1{A_t \in \+A_B} \Delta_{A_t}}. \label{line:decompose_2}
\end{align}
From \Cref{lem:moss-Delta}, the second term could be bounded:
$8\sum_{j=1}^k  \E{\zeta_j}\cdot T \leq 8k\sqrt{\delta}\cdot T = 8(m-k) \sqrt{\log k\log (m-k) T}$. 

Since now the range of $k_A$ changes to $1\leq k_A \leq m-k$, from \Cref{lem:actionset_divide}, we could divide the action set $\+A$ into $1+ \lfloor \log (m-k) \rfloor$ exhaustive and mutually exclusive sets $\tp{\+A_\ell}_{0\leq \ell \leq \lfloor \log (m-k) \rfloor}$, where 
$$
\+A_0 = \set{A\in \+A \mid \abs{D_A} > \lfloor \frac{k_A}{2} \rfloor}, \+A_{\ell} = \set{A\in \+A \mid \abs{D_A} \leq \lfloor \frac{k_A}{2} \rfloor \land \ell_A = \ell}.
$$
Furthermore, we could similarly prove \Cref{lem:indicator_decompose} in this case: for every $0\leq \ell \leq \lfloor \log (m-k) \rfloor$, 
\begin{align} 
    \1{A_t\in \+A_\ell \cap \+A_B } \leq \frac{2^{\ell+1}}{k_{A_t}} \sum_{i\notin A^*} \1{\hat\mu_{i,t} + \rho_{i,t} \ge \mu_{i} + \frac{\tp{\sqrt{2}}^\ell}{16}\cdot \frac{\Delta_{A_t}}{k_{A_t}}\land i\in A_t}.
\end{align}
Next we give the actions in every $\+A_\ell$ a regret upper bound. That is, for every $0\leq \ell \leq \lfloor \log (m-k) \rfloor$, we need to prove 
\[
    \sum_{t=1}^T \E{\1{A_t\in \+A_\ell \cap \+A_B}\cdot \Delta_{A_t}} = O\tp{(m-k)\sqrt{\frac{T\log k}{\log (m-k)}}}.
\]
Following the same process of the proof of \Cref{lem:action_arrange}, by slightly changing the upper bound of $k_A$ from $k$ to $m-k$, we have that, for every $0\leq \ell \leq \lfloor \log (m-k) \rfloor$, 
\begin{align*}
    &\phantomeq \sum_{t=1}^T\E{\1{A_t\in \+A_\ell}\Delta_{A_t}} \\
    &\leq 1540 \sum_{i\notin A^*} \tp{\sum_{n=2}^{N_{i,\ell}} 
    \frac{k_{i,\ell,n}^2 \ln \tp{\frac{(m-k)^2}{\delta} \cdot \frac{\Delta_{i,\ell,n}^2}{k_{i,\ell,n}^2}} }{\Delta_{i,\ell,n}^2} \tp{\frac{\Delta_{i,\ell,n}}{k_{i,\ell,n}} - \frac{\Delta_{i,\ell,n-1}}{k_{i,\ell,n-1}}} + \frac{k_{i,\ell,1} \ln \tp{\frac{(m-k)^2}{\delta} \cdot \frac{\Delta_{i,\ell,1}^2}{k_{i,\ell,1}^2}} }{\Delta_{i,\ell,1}}  }.
\end{align*}

Next we consider the properties of the function $f(x) = \frac{\ln\tp{\frac{(m-k)^2 x^2}{\delta}}}{x^2}, x > \frac{\Delta_{i,\ell,1}}{k_{i,\ell,1}} > \sqrt{\frac{e\log k \log(m-k)}{T}} = \frac{k\sqrt{e\delta}}{m-k}$. It is easy to verify that $f(x)$ is monotonically decreasing and positive when $x > \sqrt{\frac{e\delta}{(m-k)^2} } = \frac{\sqrt{e\delta}}{m-k}$. Additionally we have $\int f(x) dx = \frac{\ln \frac{\delta}{e^2 (m-k)^2} - 2\ln x}{x}$. Also $g(x) = \frac{\ln\tp{\frac{(m-k)^2 x^2}{\delta}}}{x}$ is monotonically decreasing when $x > \frac{k\sqrt{e\delta}}{m-k}$. Thus the summation could be bounded by the integral of $f(x)$ on $(\frac{k\sqrt{e\delta}}{m-k},\infty)$
\begin{align*}
    \sum_{t=1}^T\E{\1{A_t\in \+A_\ell}\Delta_{A_t}}  
    &\leq 1540 \sum_{i\notin S^*} \tp{\int_{\frac{k\sqrt{e\delta}}{m-k} }^\infty f(x) dx + \frac{\ln ek^2}{ \frac{k\sqrt{e\delta}}{m-k} }} \\ 
    &\leq 1540 \sum_{i\notin S^*} \tp{ \frac{\ln \frac{\delta}{e^2(m-k)^2} - 2\ln x}{x} \Big|_{ \frac{k\sqrt{e\delta}}{m-k} }^\infty + \ln ek^2 \sqrt{\frac{T}{e\log k \log (m-k)}} } \\ 
    &= 1540 \sum_{i\notin S^*} \tp{\ln e^4k^4 \sqrt{\frac{T}{e\log k \log (m-k)}} } \\ 
    &= O\tp{(m-k)\sqrt{\frac{T\log k}{\log (m-k)}}}.
\end{align*}
Applying this lemma, we have
\begin{align*}
    \sum_{t=1}^T \E{\1{A_t\in  \+A_B}\cdot \Delta_{A_t}} 
    &= \sum_{\ell=0}^{\lfloor \log (m-k) \rfloor}  \sum_{t=1}^T \E{\1{A_t\in \+A_\ell \cap \+A_B}\cdot \Delta_{A_t}} \\
    &= O\tp{(m-k) \sqrt{\log k \log(m-k) T}}.
\end{align*}

Bring this back to \Cref{line:decompose_2}, we could obtain that the regret of \Cref{alg:cmoss} when $k> \frac{m}{2}$ is $O\tp{(m-k) \sqrt{\log k \log(m-k) T}}$.

\subsection{Proofs of \Cref{sec:cmoss_cascading}}
\label{append:cmoss_cascading}
We take the disjunctive cascading bandits for example. In this case $r(A_t, \vmu)=\prod_{i\in A_t} (1-\mu_i)$. Since $\+A$ contains all subsets of $[m]$, $A^*$ consists of the the $k$ arms with the highest expected means. Denote $A^* = \set{j_1,\cdots,j_k}$ where $\mu_{j_1} \leq \mu_{j_2}\leq \cdots \leq \mu_{j_k}$ and $A_t = \set{i_1,\cdots,i_k}$ where $\mu_{i_1} \leq \cdots \leq \mu_{i_k}$. Thus
\begin{align*}
    \abs{r(A^*, \vmu)- r(A_t, \vmu))} 
    &= \abs{\prod_{n=1}^k (1-\mu_{j_n})   - \prod_{n=1}^k (1-\mu_{i_n})} \\
    &\leq \sum_{l=0}^{k-1} \abs{ \prod_{n=1}^l (1-\mu_{i_n}) \prod_{n=l+1}^k (1-\mu_{j_n}) -  \prod_{n=1}^{l+1} (1-\mu_{i_n}) \prod_{n=l+2}^k (1-\mu_{j_n}) } \\
    &=  \sum_{l=0}^{k-1} \tp{ \prod_{n=1}^l (1-\mu_{i_n}) \cdot\prod_{n=l+2}^k (1-\mu_{j_n}) \cdot \abs{\mu_{j_{l+1}} - \mu_{i_{l+1}} }}  \\
    &\leq \sum_{n=1}^k  \abs{\mu_{j_n} - \mu_{i_n}} = \sum_{n=1}^k \mu_{j_n} - \sum_{n=1}^k \mu_{i_n}.
\end{align*}

Let $p_{A,t}$ be the the probability of observing all base arms of $A$ at round $t$. And denote $p^* = \min_{t\in T}\min_{A\in \+A} p_{A,t}$. Therefore in each round $\1{\tau_t = A_t} \ge p^*$. Let $\+F_t$ be the history before the learning agent chooses action at time $t$. Thus $\+F_t$ contains feedback information at all time $s < t$. We could bound the regret as follows
\begin{align}
\E{R(T)} 
&= \E{\E{\sum_{t=1}^T \Delta_{A_t} \mid \+F_t}} \notag \\
&= \E{\E{\sum_{t=1}^T \Delta_{A_t} \cdot \E{\frac{1}{p_{A_t,t}}\1{\tau_t = A_t}\mid A_t} \mid \+F_t}} \notag \\
&\leq \frac{1}{p^*} \E{\E{\sum_{t=1}^T \Delta_{A_t} \1{\tau_t = A_t} \mid \+F_t}} \notag \\ 
&=  \begin{cases}
        O\tp{\frac{\log k}{p^*}\sqrt{kmT}} &, k\leq \frac{m}{2} \\
        O\tp{\frac{m-k}{p^*}\sqrt{\log k \log(m-k) T}} &, k > \frac{m}{2} 
    \end{cases} \label{line:qstar}
\end{align}
where \Cref{line:qstar} applies \Cref{thm:cmoss_ub} to this case.

For the conjunctive case, $r(A_t, \vmu) = \prod_{i\in A_t} \mu_i$. The rest analysis follows the same pattern as the disjunctive case.

\section{Details of CUCB, EXP3.M and HYBRID} \label{append:algo}
In this section we present the complete formulations of the CUCB algorithm \cite{CWY13,KWAS15,LWZ24,WC17}, the EXP3.M algorithm \cite{UNK10} and the HYBRID algorithm \cite{ZLW19}.

\subsection{The CUCB algorithm}

\begin{algorithm}[H]
\caption{Combinatorial Upper Confidence Bound (CUCB)} 
\label{alg:cucb}
\textbf{Input}:{$m, k$.}
    \begin{algorithmic}[1]
        \State For each arm $i$, $T_{i}\leftarrow 0$
        \Comment{maintain the total number of times arm $i$ is played so far}
        \State For each arm $i$, $\hat \mu_{i}\leftarrow 1$
        \Comment{maintain the empirical mean of $\arm_i$}
        \For{$t=1, 2, 3, \ldots$}
            \State For each arm $i\in [m]$, $\rho_i \leftarrow \sqrt{\frac{3\ln t}{2T_i}}$
            \Comment{the confidence radius, $\rho_i=+\infty$ if $T_i=0$}
            \State For each arm $i\in [m]$, $\bar{\mu}_{i} \leftarrow  \min\left \{\hat{\mu}_{i} + \rho_i,1\right \}, \bar\vmu \leftarrow (\bar{\mu}_{1}, \cdots, \bar{\mu}_{m})$   
            \State $A_t \leftarrow \argmax_{A\in \+A} r(A, \bar\vmu)$
            \State Pull $A_t$ of base arms and obtain the triggered set $\tau_t$. Observe feedback $X_i^{(t)}$ for all $i\in \tau_t$
            \State For every $i \in \tau_t$, update $T_i$ and $\hat{\mu}_{i}$: 
            $T_i \leftarrow T_i + 1$, $\hat{\mu}_{i} \leftarrow \hat{\mu}_{i} + (X_i^{(t)}-\hat{\mu}_{i})/T_i$
        \EndFor
    \end{algorithmic}
\end{algorithm}

 The regret bound of the CUCB algorithm is $O\tp{\sqrt{kmT\log T}}$ under semi-bandit feedback. A detailed analysis of the regret is provided in \cite{KWAS15}.

\subsection{The EXP3.M algorithm}

\begin{algorithm}[H]
\caption{EXP3.M} \label{alg:exp3m}
\textbf{Input}:{mixing coefficient $\gamma \in (0,1]$}
    \begin{algorithmic}[1]
        \State $\omega_i(1) = 1$ for $i=1,\cdots,m$
        \Comment{Initialization}
        \For{$t=1, 2, 3, \ldots$}
            \If{$\max_{j\in[m]}\omega_j(t)\ge \tp{\frac{1}{k} - \frac{\gamma}{m}}\sum_{i=1}^m \frac{\omega_i(t)}{1-\gamma} $}
                \State Decide $\alpha_t$ so as to satisfy         $\frac{\alpha_t}{\sum_{\omega_i(t)\ge \alpha_t}\alpha_t + \sum_{\omega_i(t)< \alpha_t}\omega_i(t)} = \tp{\frac{1}{k} - \frac{\gamma}{m}}\cdot\frac{1}{1-\gamma}$ \label{line:decide_alpha}
                \State Set $S_0(t) = \set{i\mid \omega_i(t)\ge \alpha_t}$ and $\omega_i'(t) = \alpha_t$ for $i\in S_0(t)$
            \Else
                \State Set $S_0(t) = \emptyset$
            \EndIf
            \State Set  $\omega_i'(t) = \omega_i(t)$ for $i\in [m]\setminus S_0(t)$
            \State Set $p_i(t) = k\tp{\tp{1-\gamma}\frac{\omega_i'(t)}{\sum_{j=1}^m\omega_j'(t)} + \frac{\gamma}{m}}$ for $i\in[m]$ \label{line:p_i}
            \State Set $S(t) = \*{DepRound} \tp{k,(p_1(t),p_2(t),\cdots,p_m(t))}$
            \State Receive rewards $x_i(t)$ for $i\in S(t)$
            \For{$i = 1,\cdots, m$}   \label{line:for_loop}
                \State Set $\hat x_i(t) = \begin{cases}
                    \frac{x_i(t)}{p_i(t)}, & i\in S(t) \\
                    0, & o.w.
                \end{cases}$
                \State Set $\omega_i(t+1) = \begin{cases}
                    \omega_i(t)\exp\tp{\frac{k\gamma \hat x_i(t)}{m}}, & i\notin S_0(t) \\
                    \omega_i(t), & o.w.
                \end{cases}$
            \EndFor
        \EndFor
    \end{algorithmic}
\end{algorithm}

In \Cref{alg:exp3m}, if all $\omega_i(t)$ are less than $\tp{\frac{1}{k} - \frac{\gamma}{m}}\sum_{i=1}^m \frac{\omega_i(t)}{1-\gamma}$, then 
every $p_i(t)$ calculated at Line 9 of \Cref{line:p_i} is less than $1$ and $S_0(t)=\emptyset$. Otherwise it chooses a appropriate threshold $\alpha_t$ that the temporal weight $\omega_i'(t)$ is set to $\alpha_t$ for $i\in S_0(t)$ and $\omega_i(t)$ for $i\notin S_0(t)$. Thus $p_i(t) =1$ for $i\in S_0(t)$, which means that $S_0(t) \subseteq S(t)$. And $\omega_i(t)$ is not updated for $i\in S_0(t)$ since it is large to some extent.

The algorithm $\*{DepRound}$ is designed as follows:
\begin{algorithm}[H]
\caption{$\*{DepRound}$}  \label{alg:depround}
\textbf{Input}:{ Natural number $k$, $\tp{p_1,\cdots,p_m}$ with $\sum_{i=1}^m p_i=k, 0\leq p_i \leq 1, \forall i\in [m]$}
    \begin{algorithmic}[1]
        \While{there exists $i\in [m]$ with $0<p_i<1$}
            \State Choose distinct $i$ and $j$ with $0 <p_i < 1$ and $0<p_j < 1$ 
            \State Set $\alpha = \min\set{1-p_i,p_j}$, $\beta=\min\set{p_i,1-p_j}$
            \State Update $p_i,p_j$ as
            $(p_i,p_j) = \begin{cases}
                (p_i+\alpha, p_j-\alpha), & w.p.\; \frac{\beta}{\alpha+\beta} \\
                (p_i-\beta, p_j+\beta), & w.p. \; \frac{\alpha}{\alpha+\beta} 
            \end{cases}$
        \EndWhile
    \end{algorithmic}
\Output{$\set{i\mid p_i=1, i\in[m]}$}
\end{algorithm}

In \Cref{alg:depround}, $(p_1,\cdots,p_m)$ is probabilistically updated until all the components are $0$ or $1$ while keeping the condition that $\sum_{i=1}^m p_i=k$. Therefore if there exists $i\in[m]$ with $0 <p_i < 1$, then there must be some $j\neq i$ with $0 <p_j < 1$ otherwise the summation  $\sum_{i=1}^m p_i$ is not an integer. Inside the while-loop, the iteration lasts at most $m$ times because at least one of $p_i$ and $p_j$ becomes $0$ or $1$ each time. The regret bound of the EXP3.M algorithm is $O\tp{\sqrt{kmT\log \tp{\frac{m}{k}}}}$ and the detailed analysis is provided in \cite{UNK10}.

\subsection{The HYBRID algorithm}

\begin{algorithm}[H]
\caption{FTRL with hybrid regularizer for semi-bandits}
\label{alg:hybrid}
\textbf{Input}: { $0<\gamma \leq 1$, sampling scheme $P$}
\begin{algorithmic}[1]    
    \State $\hat{L}_0 = (0, \ldots, 0), \eta_t = 1/\sqrt{t}$
    \For{$t=1, 2, \dots$}
    \State Compute $ x_t = \argmin\limits_{ x \in \conv(\mathcal{A})} \big\langle x,\hat{L}_{t-1}\big\rangle + \eta_t^{-1}\Psi( x)$
    \State Sample $A_t \sim P(x_t)$\;
    \State Observe $o_t = A_t \circ \ell_t$\;
    \State Construct estimator $\hat\ell_t,\; \forall i:\; \hat\ell_{ti} = \frac{(o_{ti}+1)\mathbb{I}_t(i)}{ x_{ti}}-1 $ \;
    \State Update $\hat L_t = \hat L_{t-1}+\hat\ell_t$\;
    \EndFor
\end{algorithmic}
\end{algorithm}

Here $\conv(\+A)$ is the convex hull of $\+A$. And the hybrid regularizer 
\begin{equation}
\Psi( x) =\Regularizer
\end{equation} 
with a parameter $0<\gamma\leq 1$ to be chosen as follows
\[
    \gamma=\begin{cases}
    1 &\mbox{ if } k\leq \frac{m}{2}\\
    \min\{1,1/\sqrt{\log(m/(m-k))}\} &\mbox{ otherwise.} 
    \end{cases}
\] 

The regret bound of the HYBRID algorithm is $O\tp{\sqrt{kmT}}$ when $k \leq \frac{m}{2}$ and $O\left((m-k)\sqrt{\log\tp{\frac{m}{m-k}} T} \right)$ when $k> \frac{m}{2}$. And the detailed analysis is provided in \cite{ZLW19}.

\section{Experiments}
\label{append:experiments}
The experiments are performed on a Dell Inspiration 16 PLUS 7620 laptop equipped with a 12th Generation Intel(R) Core(TM) i7-12700H CPU running at 2.30 GHz, alongside 16 GB of RAM. The operating system utilized is Windows 11, and the Python version employed is 3.11.7.

\subsection{Comparison with baselines and beyond}
\subsubsection{Synthetic dataset}
As described in the main text, in the synthetic setting, we simulate a combinatorial bandit problem with $m = 30$ base arms and cardinality $k = 10$ and $k = 20$. The initial value for each arm is independently drawn from uniform distributions $\mathtt{U}(0, 0.1)$ and  $\mathtt{U}(0.3,0.4)$. Each algorithm is run for $100000$ rounds, and all results are averaged over $10$ independent runs.

As shown in \Cref{tab:formal}, CMOSS attains notable regret reductions relative to the baselines under the ($[0,0.1]$) initialization. For ($k=10$), the reductions are \textbf{52.1\%} (CUCB), \textbf{47.2\%} (EXP3.M), and \textbf{67.3\%} (HYBRID). For ($k=20$), the improvements remain significant at \textbf{47.3\%}, \textbf{19.4\%}, and \textbf{61.7\%}, respectively. This trend persists in ($[0.3,0.4]$) regime: CMOSS outperforms CUCB, EXP3.M, and HYBRID by \textbf{49.9\%} / \textbf{50.8\%} / \textbf{58.0\%} for ($k=10$), and \textbf{48.9\%} / \textbf{56.9} / \textbf{52.5\%} for ($k=20$). In terms of computational efficiency, CMOSS runs only slightly slower than CUCB—by approximately \textbf{2.22s} and \textbf{2.85s} in the ($[0,0.1]$) setting and \textbf{1.88s} and \textbf{1.56s} in ($[0.3,0.4]$). However, it is markedly faster than both EXP3.M and HYBRID, achieving speedups on the order of \textbf{6–9$\times$} relative to EXP3.M and \textbf{4–9$\times$} relative to HYBRID. These results demonstrate that CMOSS delivers superior regret performance while retaining computational efficiency.

\subsubsection{Real-world dataset} \label{append_yelp}
To simulate a real-world bandit environment, we use the public Yelp dataset\footnote{\url{http://www.yelp.com/dataset_challenge}}, a classical dataset in the CMAB literature. Following the setup in \cite{LWK19}, we construct a filtered subset consisting of 1500 users and 10000 items from the original dataset. Each user and item is represented as a $50$-dimensional real-valued feature vector, normalized to unit $\ell_2$ norm. For each user–item pair \((u, a)\), we compute an affinity score as the dot product \(\mathbf{u}^\top \mathbf{a}\), where \(\mathbf{u}, \mathbf{a} \in \mathbb{R}^{50}\).

To match the synthetic regimes used in our experiments, the raw affinity scores are min-max normalized and linearly rescaled into two ranges: a low-probability regime \([0, 0.1]\) and a high-probability regime \([0.3,0.4]\). Specifically, let \(\min\) and \(\max\) denote the minimum and maximum dot products across all user–item pairs. The scores are transformed as follows:
\begin{align}
    \text{score}_{\text{low}}(u,a) &= 0.1 \cdot \frac{\mathbf{u}^\top \mathbf{a} - \min}{\max - \min},\\
    \text{score}_{\text{high}}(u,a) &= 0.3 + 0.1 \cdot \frac{\mathbf{u}^\top \mathbf{a} - \min}{\max - \min}.
\end{align}

The resulting values are interpreted as the initial click probabilities for base arms. For each regime, we independently sample 30 such values as base arms using a fixed random seed (2026) to ensure reproducibility.

As shown in \Cref{tab:formal}, the results on the Yelp dataset exhibit trends consistent with the synthetic setting. Under the $[0,0.1]$ initialization, CMOSS reduces cumulative regret \textbf{46.3\%} (CUCB), \textbf{23.4\%} (EXP3.M), and \textbf{54.5\%} (HYBRID). For ($k=20$), CMOSS maintains strong improvements of \textbf{45.2\%}, \textbf{8.9\%}, and \textbf{58.9\%}, respectively. A similar pattern holds under the $[0.3,0.4]$ regime: CMOSS outperforms CUCB, EXP3.M, and HYBRID by $\textbf{44.1\%}$ / \textbf{36.3\%} / \textbf{44.3\%} for ($k=10$), and \textbf{28.4\%} / \textbf{39.4\%} / \textbf{34.3\%} for ($k=20$). Runtime comparisons show that CMOSS remains close to CUCB-slower by only 2-3 seconds across settings-while being significantly faster than EXP3.M and HYBRID, with speedups between \textbf{6$\times$} and \textbf{12$\times$}. These findings confirm that CMOSS maintains its advantage not only in controlled synthetic environments but also in real-world recommendation scenarios, even under the exaggerated high-probability regime ($[0.3,0.4]$).

{\rmfamily
\begin{table}[H]
\caption{Semi-bandit feedback: cumulative regret and runtime (in seconds) of algorithms under two initial base arm value ranges: \([0,0.1]\) and \([0.3,0.4]\), evaluated on both synthetic and Yelp datasets and fixed $m=30$. 
\\
$\star$ indicates our proposed method (CMOSS).
}
\centering
\renewcommand{\arraystretch}{1.2}
\resizebox{\linewidth}{!}{%
\setlength{\tabcolsep}{1.5mm}
\begin{tabular}{c c c c c c c c c c}
\toprule
\multirow{3}{*}{\textbf{Metric}} & \multirow{3}{*}{\textbf{Algorithm}} & \multicolumn{4}{c}{Range $[0,0.1]$} & \multicolumn{4}{c}{Range $[0.3,0.4]$} \\
\cmidrule(lr){3-6} \cmidrule(lr){7-10}
 & & \multicolumn{2}{c}{Synthetic} & \multicolumn{2}{c}{Yelp} & \multicolumn{2}{c}{Synthetic} & \multicolumn{2}{c}{Yelp} \\ \cmidrule(lr){3-4} \cmidrule(lr){5-6} \cmidrule(lr){7-8} \cmidrule(lr){9-10}
 & & $k=10$ & $k=20$ & $k=10$ & $k=20$ & $k=10$ & $k=20$ & $k=10$ & $k=20$ \\
\midrule
\multirow{4}{*}{Regret} 
 & \textbf{CMOSS}$\star$  & \textbf{1917.013} & \textbf{1237.831} & \textbf{2368.325} & \textbf{1050.454} & \textbf{1974.476} & \textbf{1205.354} & \textbf{2265.129} & \textbf{1586.061} \\
 & CUCB & 4000.132 & 2348.979 & 4407.244 & 1916.620 & 3945.550 & 2361.077 & 4050.910 & 2214.263 \\
 & EXP3.M & 3523.649 & 1535.670 & 3094.234 & 1154.182 & 4009.885 & 2800.222 & 3555.937 & 2621.060 \\
 & HYBRID & 5858.858 & 3233.744 & 5197.661 & 2561.223 & 4706.725 & 2538.204 & 4067.236 & 2411.709 \\
\midrule
\multirow{4}{*}{Runtime (sec)} 
 & \textbf{CMOSS}$\star$ & \textbf{17.372} & \textbf{30.670} & \textbf{13.529} & \textbf{24.213} & \textbf{12.601} & \textbf{21.948} & \textbf{14.609} & \textbf{27.363} \\
 & CUCB & 15.155 & 27.816 & 11.422 & 21.108 & 10.724 & 20.391 & 11.739 & 23.882 \\
 & EXP3.M & 104.291 & 92.774 & 79.239 & 78.895 & 79.592 & 87.363 & 95.136 & 110.756 \\
 & HYBRID & 121.126 & 111.946 & 108.540 & 105.515 & 95.413 & 76.987 & 143.529 & 163.490 \\
\bottomrule
\end{tabular}
}
\label{tab:formal}
\end{table}
}

\subsection{Ablation studies and beyond} \label{append_ablation}

We provide the complete results of our ablation study evaluating the impact of two key parameters: the cardinality \( k \) of the super arm and the total number of base arms \( m \). We consider two controlled experiments with initial base arms in $[0,0.1]$:
\begin{itemize}
    \item \textbf{Varying \( k \)}: \( k \in \{5, 10, 15, 20, 25\} \), with \( m = 30 \) fixed.
    \item \textbf{Varying \( m \)}: \( m \in \{20, 25, 30, 35, 40\} \), with \( k = 15 \) fixed.
\end{itemize}

On the \textbf{synthetic} dataset ($\mathtt{U}(0,0.1)$), CMOSS consistently achieves the lowest cumulative regret across all configurations. When varying $k$, CMOSS outperforms CUCB, EXP3.M, and HYBRID by up to \textbf{52.7\%}, \textbf{55.7\%}, and \textbf{67.4\%}, respectively. When varying $m$, the gains become even more pronounced—CMOSS reduces regret by as much as \textbf{53.3\%} over CUCB, \textbf{40.6\%} over EXP3.M, and \textbf{68.7\%} over HYBRID. In terms of runtime, CMOSS remains close to CUCB while being substantially faster than EXP3.M and HYBRID across all settings. Moreover, CMOSS exhibits stable runtime as $m$ increases, demonstrating strong scalability. On the \textbf{Yelp} dataset (rescaled to $[0,0.1]$), we observe consistent trends. When varying $k$, CMOSS achieves regret reductions of up to \textbf{51.0\%} over CUCB, \textbf{39.4\%} over EXP3.M, and \textbf{70.0\%} over HYBRID. When varying $m$, CMOSS again demonstrates clear advantages—outperforming CUCB by up to \textbf{46.9\%}, \textbf{23.1\%}, and \textbf{61.5\%}, respectively. Runtime comparisons further confirm CMOSS’s efficiency: it remains close to CUCB while being substantially faster than both EXP3.M and HYBRID. Additionally, CMOSS maintains stable runtime as either $k$ or $m$ increases, showcasing its practical scalability on real-world data. Complete numerical results are reported in \Cref{tab:ablation_k_m} and \Cref{tab:ablation_k_m_yelp}.

{\rmfamily
\begin{table}[H]
\caption{Ablation study of semi-bandit feedback: varying $k$ (with $m=30$ fixed) and varying $m$ (with $k=15$ fixed) on synthetic dataset. Results correspond to subplots (3)(4)(7)(8) in \Cref{fig:datasets}.\\  $\star$ indicates our proposed method (CMOSS).}
\centering
\renewcommand{\arraystretch}{1.2}
\setlength{\tabcolsep}{1.5mm}
\resizebox{\textwidth}{!}{
\begin{tabular}{c c c c c c c c c c c c}
\toprule
\multirow{3}{*}{\textbf{Metric}} & \multirow{3}{*}{\textbf{Algorithm}} & \multicolumn{5}{c}{Varying $k$ (Synthetic)} & \multicolumn{5}{c}{Varying $m$ (Synthetic)} \\
\cmidrule(lr){3-7} \cmidrule(lr){8-12}
 & & $k=5$ & $k=10$ & $k=15$ & $k=20$ & $k=25$ & $m=20$ & $m=25$ & $m=30$ & $m=35$ & $m=40$ \\
\midrule
\multirow{4}{*}{Regret} 
 & \textbf{CMOSS}$\star$ 
 & \textbf{2096.027} & \textbf{1783.914} & \textbf{1523.988} & \textbf{1376.970} & \textbf{661.014}
 & \textbf{618.792} & \textbf{1092.666} & \textbf{1440.614} & \textbf{1965.203} & \textbf{2695.970} \\
 & CUCB 
 & 4429.727 & 3446.924 & 3184.634 & 2578.753 & 1131.007
 & 1296.234 & 2182.487 & 2791.642 & 4208.573 & 5586.506 \\
 & EXP3.M 
 & 4736.340 & 3271.894 & 2162.514 & 1462.691 & 659.203
 & 836.489 & 1500.403 & 2327.747 & 3305.964 & 4369.808 \\
 & HYBRID 
 & 5596.137 & 5123.518 & 4676.580 & 3456.369 & 1517.378
 & 1918.722 & 3376.063 & 4285.878 & 6283.876 & 8184.837 \\
\midrule
\multirow{4}{*}{Runtime (sec)} 
 & \textbf{CMOSS}$\star$ 
 & \textbf{8.606} & \textbf{13.616} & \textbf{19.021} & \textbf{23.666} & \textbf{28.344}
 & \textbf{18.795} & \textbf{21.210} & \textbf{29.320} & \textbf{29.423} & \textbf{72.478} \\
 & CUCB 
 & 6.574 & 11.654 & 16.397 & 21.287 & 26.991
 & 16.218 & 21.619 & 25.239 & 18.210 & 69.932 \\
 & EXP3.M 
 & 76.208 & 120.789 & 107.957 & 116.241 & 106.603
 & 61.540 & 96.724 & 124.403 & 129.031 & 447.195 \\
 & HYBRID 
 & 125.178 & 135.696 & 108.276 & 91.548 & 88.490
 & 63.963 & 96.495 & 147.526 & 172.044 & 245.882 \\
\bottomrule
\end{tabular}}
\label{tab:ablation_k_m}
\end{table}
}
{\rmfamily
\begin{table}[H]
\caption{Ablation study of semi-bandit feedback: varying $k$ (with $m=30$ fixed) and varying $m$ (with $k=15$ fixed) on Yelp dataset. \\ $\star$ indicates our proposed method (CMOSS).}
\centering
\renewcommand{\arraystretch}{1.2}
\setlength{\tabcolsep}{1.5mm}
\resizebox{\textwidth}{!}{
\begin{tabular}{c c c c c c c c c c c c}
\toprule
\multirow{3}{*}{\textbf{Metric}} & \multirow{3}{*}{\textbf{Algorithm}} & \multicolumn{5}{c}{Varying $k$ (Yelp)} & \multicolumn{5}{c}{Varying $m$ (Yelp)} \\
\cmidrule(lr){3-7} \cmidrule(lr){8-12}
 & & $k=5$ & $k=10$ & $k=15$ & $k=20$ & $k=25$ & $m=20$ & $m=25$ & $m=30$ & $m=35$ & $m=40$ \\
\midrule
\multirow{4}{*}{Regret} 
 & \textbf{CMOSS}$\star$ 
 & \textbf{2436.508} & \textbf{2309.895} & \textbf{1705.547} & \textbf{1085.540} & \textbf{583.928}
 & \textbf{659.741} & \textbf{1324.735} & \textbf{1779.739} & \textbf{1890.972} & \textbf{2604.883} \\
 & CUCB 
 & 4020.879 & 4208.577 & 3074.333 & 2175.718 & 1191.837
 & 1162.516 & 2450.990 & 3299.031 & 3559.703 & 4558.815 \\
 & EXP3.M 
 & 4021.209 & 2988.275 & 2025.406 & 1216.121 & 663.273
 & 658.334 & 1409.273 & 2088.690 & 2424.766 & 3387.021 \\
 & HYBRID 
 & 4969.380 & 5078.731 & 3960.924 & 2890.771 & 1944.711
 & 1548.021 & 3338.523 & 4347.092 & 4913.297 & 5471.507 \\
\midrule
\multirow{4}{*}{Runtime (sec)} 
 & \textbf{CMOSS}$\star$ 
 & \textbf{15.016} & \textbf{21.507} & \textbf{22.474} & \textbf{28.051} & \textbf{34.512}
 & \textbf{18.526} & \textbf{26.895} & \textbf{25.122} & \textbf{28.148} & \textbf{82.126} \\
 & CUCB 
 & 7.448 & 13.630 & 19.281 & 25.876 & 31.900
 & 16.236 & 24.373 & 23.632 & 28.706 & 71.921 \\
 & EXP3.M 
 & 91.734 & 106.868 & 127.099 & 166.739 & 86.960
 & 69.510 & 95.093 & 132.767 & 109.091 & 435.470 \\
 & HYBRID 
 & 119.877 & 125.436 & 104.920 & 118.024 & 155.868
 & 78.343 & 105.469 & 156.025 & 167.520 & 379.948 \\
\bottomrule
\end{tabular}}
\label{tab:ablation_k_m_yelp}
\end{table}
}

\subsection{Parameter selection for EXP3.M}\label{append:exp3m_gamma}

The performance of EXP3.M in semi-bandits is highly sensitive to the choice of its mixing coefficient \(\gamma\), which controls the balance between exploration and exploitation. To determine an appropriate setting for our environment, we conduct an ablation over $\gamma \in \{0.001, 0.01, 0.1\}$ with cardinality $k \in \{10,20\}$. Each configuration is run for a horizon of $T = 100000$ and averaged over 10 independent repetitions.

As shown in \Cref{tab:gamma} and \Cref{fig:gamma}, $\gamma=0.01$ consistently achieves the lowest cumulative regret across both initialization regimes. Under $\mathtt{U}(0,0.1)$, it reduces regret by $83.9\%$ and $80.3\%$ relative to $\gamma=0.001$ and $\gamma=0.1$ for $k=10$, and by $84.1\%$ and $44.8\%$ for $k=20$. A similar pattern holds under $\mathtt{U}(0.3,0.4)$: $\gamma=0.01$ improves over $\gamma=0.001$ and $\gamma=0.1$ by $75.6\%$ and $81.3\%$ for $k=10$, and by $67.3\%$ and $89.3\%$ for $k=20$. Based on these findings, we adopt \(\gamma = 0.01\) as the default parameter for all experiments involving EXP3.M, as it strikes the most favorable balance between exploration and exploitation in our stochastic environment.

\begin{figure}[H]
    \centering    \includegraphics[width=0.9\textwidth]{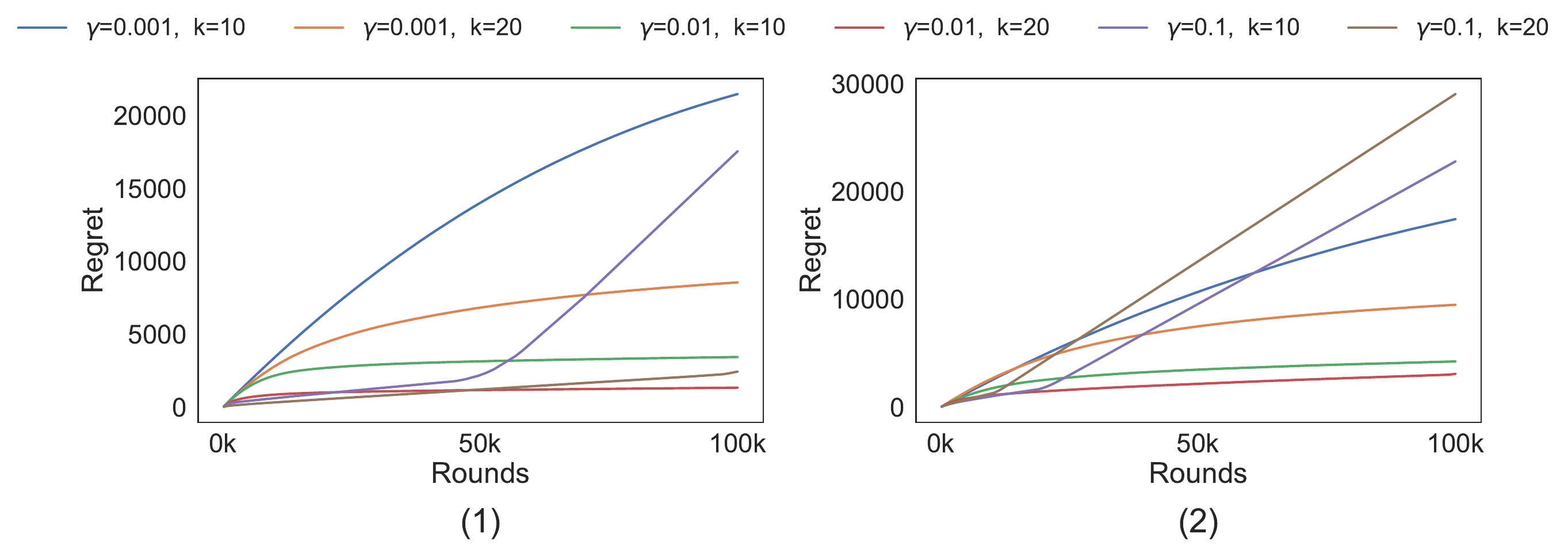}
    \caption{Ablation study of EXP3.M with varying mixing coefficient $\gamma \in \{0.001, 0.01, 0.1\}$.}
    \label{fig:gamma}
\end{figure}  

{\rmfamily
\begin{table}[H]
\caption{Ablation study of EXP3.M under different exploration parameters $\gamma$ on the synthetic dataset. 
Results are shown under two base-arm initial value distributions: (1) $\mathtt{U}(0,0.1)$; (2) $\mathtt{U}(0.3,0.4)$.}
\centering
\renewcommand{\arraystretch}{1.2}
\setlength{\tabcolsep}{1.5mm}
\resizebox{\textwidth}{!}{
\begin{tabular}{c c c c c c c c c c c c c c}
\toprule
\multirow{3}{*}{\textbf{Metric}} & \multirow{3}{*}{\textbf{Algorithm}} & \multicolumn{6}{c}{Range $[0,0.1]$} & \multicolumn{6}{c}{Range $[0.3,0.4]$} \\
\cmidrule(lr){3-8} \cmidrule(lr){9-14}
 & & \multicolumn{3}{c}{$k=10$} & \multicolumn{3}{c}{$k=20$} & \multicolumn{3}{c}{$k=10$} & \multicolumn{3}{c}{$k=20$} \\
\cmidrule(lr){3-5} \cmidrule(lr){6-8} \cmidrule(lr){9-11} \cmidrule(lr){12-14}
 & & $\gamma=0.001$ & $\gamma=0.01$ & $\gamma=0.1$  
 & $\gamma=0.001$ & $\gamma=0.01$ & $\gamma=0.1$
 & $\gamma=0.001$ & $\gamma=0.01$ & $\gamma=0.1$
 & $\gamma=0.001$ & $\gamma=0.01$ & $\gamma=0.1$ \\
\midrule

\multirow{1}{*}{Regret} 
& EXP3.M 
& 21475.680 & 3457.137 & 17551.325 
& 8570.484 & 1357.281 & 2459.607
& 17459.760 & 4255.882 & 22816.466
& 9514.021 & 3108.351 & 29068.987 \\
\midrule

\multirow{1}{*}{Runtime (sec)} 
& EXP3.M 
& 113.997 & 111.016 & 139.714
& 119.251 & 92.074 & 99.780
& 95.312 & 89.501 & 90.747
& 109.813 & 86.766 & 62.930 \\
\bottomrule
\end{tabular}
}
\label{tab:gamma}
\end{table}
}
\subsection{Cascading bandit}
\label{append:cascading}

Furthermore, we simulate a disjunctive cascading bandit problem with $m = 30$ base arms and cardinality $k = 10$ and $k = 20$.  In each round, the algorithm recommends a list of $k$ items, and the user examines them sequentially—either in descending or ascending order of attraction probability—clicking on the first attractive item (if any) and ignoring the remaining ones. If none of the items is deemed attractive, the round results in no  feedback and is treated as a no-op. The experimental configurations for both datasets (including synthetic and Yelp) and ablation studies in this section follow the same setup as detailed in \Cref{sec:experiments} and \Cref{append:experiments}. Particularly, all experiments presented hereafter adopt this disjunctive feedback model. Visual comparisons across all experiments are summarized in \Cref{fig:descending} (descending) and \Cref{fig:ascending} (ascending).

In addition, we \textbf{do not} include the configuration with $k = 20, m = 30$ under the $[0.3, 0.4]$ regime. This is because, under the cascading setting, the regret gap becomes extremely small in this regime: with such a large slate size relative to the attraction range, the user almost always clicks on at least one item with high probability, making the optimal expected reward very close to~1 (as implied by the Bernoulli click model). As a result, different algorithms exhibit nearly indistinguishable performance, and the comparison becomes uninformative.

\subsubsection{Descending bandit}
As shown in \Cref{tab:descending}, \textbf{CMOSS} consistently achieves lower cumulative regret than CUCB across both initialization ranges and datasets under the descending-order cascading model. For the $[0,0.1]$ initialization range, CMOSS reduces regret by \textbf{51.7\%} on the synthetic dataset and \textbf{39.6\%} on the Yelp dataset. For the $[0.3,0.4]$ range, the improvements remain notable at \textbf{15.1\%} and \textbf{25.1\%}, respectively. In terms of computational cost, CMOSS incurs moderate overhead in the $[0,0.1]$ regime, running only \textbf{1.14$\times$} slower than CUCB on the synthetic dataset and \textbf{1.03$\times$} slower on Yelp. Under the $[0.3,0.4]$ initialization, the runtime gap remains limited, with slowdowns of \textbf{1.51$\times$} on the synthetic dataset and \textbf{1.01$\times$} on Yelp. Overall, CMOSS delivers consistently improved regret performance while maintaining competitive computational efficiency, demonstrating robustness across both synthetic and real-world environments under descending-order feedback.

\begin{figure}[H]
    \centering
    \includegraphics[width=\textwidth]{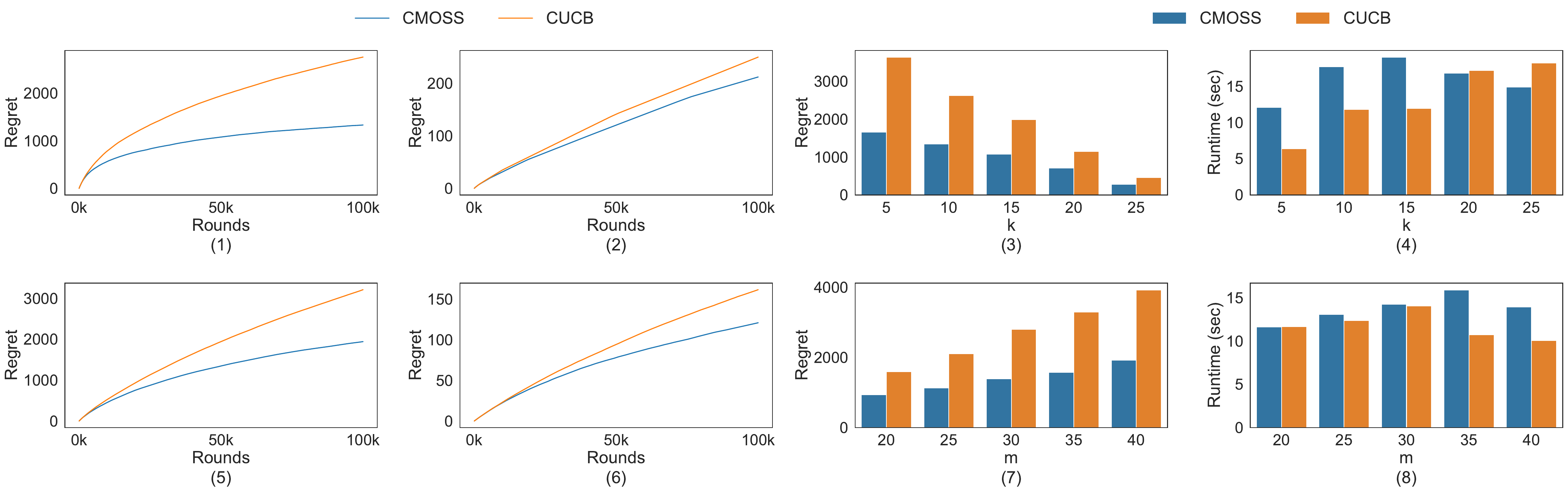}
    \caption{Comparison of CMOSS (blue) with CUCB algorithms under cascading (descending) feedback.  
Subplots (1)(2)(5)(6) show cumulative regret with fixed \(k=10\), \(m=30\); (3)(4) show ablation studies varying \(k\) (fixed $m=30$), while (7)(8) varying \(m\) (fixed $k=15$). Subplots (1)(2)(3)(4)(7)(8) use synthetic dataset; (5)(6) use Yelp dataset. Initial means of base arms fall within the range \([0, 0.1]\), except for (2)(6), which use the range \([0.3,0.4]\). }
    \label{fig:descending}
\end{figure}

{\rmfamily
\begin{table}[H]
\caption{Descending bandit feedback: cumulative regret and runtime (in seconds) of algorithms under two initial base arm value ranges: \([0,0.1]\) and \([0.3,0.4]\), evaluated on both synthetic and Yelp datasets and fixed $k=10, m=30$. 
Results correspond to subplots (1)(2)(5)(6) in \Cref{fig:descending}. \\
$\star$ indicates our proposed method (CMOSS).
}
\centering
\renewcommand{\arraystretch}{1.2}
\setlength{\tabcolsep}{1.5mm}
\begin{tabular}{c c c c c c}
\toprule
\multirow{2}{*}{\textbf{Metric}} & \multirow{2}{*}{\textbf{Algorithm}} & \multicolumn{2}{c}{Range $[0,0.1]$} & \multicolumn{2}{c}{Range $[0.3,0.4]$} \\
\cmidrule(lr){3-4} \cmidrule(lr){5-6}
 & & Synthetic & Yelp & Synthetic & Yelp \\
\midrule
\multirow{2}{*}{Regret} 
 & \textbf{CMOSS}$\star$  & \textbf{1333.863} & \textbf{1944.260} & \textbf{212.443} & \textbf{120.917}  \\
 & CUCB & 2762.497 & 3214.506 & 250.376 & 161.587  \\
\midrule
\multirow{2}{*}{Runtime (sec)} 
 & \textbf{CMOSS}$\star$ & \textbf{14.617} & \textbf{16.694} & \textbf{5.012} & \textbf{8.391}  \\
 & CUCB & 12.777 & 16.246 & 3.328 & 8.337 \\
\bottomrule
\end{tabular}
\label{tab:descending}
\end{table}
}

\Cref{tab:descending_ablation_k_m} and \Cref{tab:descending_ablation_k_m_yelp} summarize the ablation results on the synthetic and Yelp datasets under the descending-order cascading model within the range $[0,0.1]$, where we vary the cardinality $k$ and the number of base arms $m$. On the \textbf{synthetic} dataset, CMOSS consistently achieves the lowest cumulative regret across all configurations. When varying $k$, CMOSS outperforms CUCB by between \textbf{41.6\%} and \textbf{52.7\%} across all settings. When varying $m$, the improvements remain notable, ranging from \textbf{43.6\%} up to \textbf{49.8\%} relative to CUCB. In terms of runtime, CMOSS remains close to CUCB, with overheads between \textbf{1.05$\times$} and \textbf{1.31$\times$} across $k$ and \textbf{1.16$\times$} to \textbf{1.24$\times$} across $m$, while maintaining stable runtime as $m$ increases—demonstrating strong scalability. On the \textbf{Yelp} dataset, we observe consistent trends. When varying $k$, CMOSS achieves regret reductions ranging from \textbf{34.1\%} to \textbf{46.7\%} relative to CUCB. When varying $m$, CMOSS again demonstrates clear advantages, improving regret by \textbf{38.7\%} to \textbf{46.4\%} compared with CUCB. Runtime results further confirm CMOSS's efficiency: overheads range from \textbf{1.11$\times$} to \textbf{1.56$\times$} across $k$ and \textbf{1.15$\times$} to \textbf{1.27$\times$} across $m$. Overall, CMOSS maintains consistently low regret while preserving competitive runtime, highlighting its practical scalability under descending-order bandit feedback.

{\rmfamily
\begin{table}[H]
\caption{Ablation study of descending bandit feedback: varying $k$ (with $m=30$ fixed) and varying $m$ (with $k=15$ fixed) on synthetic dataset within the range $[0,0.1]$. Results correspond to subplots (3)(4)(7)(8) in \Cref{fig:descending}.\\  $\star$ indicates our proposed method (CMOSS).}
\centering
\renewcommand{\arraystretch}{1.2}
\setlength{\tabcolsep}{1.5mm}
\resizebox{\textwidth}{!}{
\begin{tabular}{c c c c c c c c c c c c}
\toprule
\multirow{2}{*}{\textbf{Metric}} & \multirow{2}{*}{\textbf{Algorithm}} & \multicolumn{5}{c}{Varying $k$ (Synthetic)} & \multicolumn{5}{c}{Varying $m$ (Synthetic)} \\
\cmidrule(lr){3-7} \cmidrule(lr){8-12}
  & & $k=5$ & $k=10$ & $k=15$ & $k=20$ & $k=25$ & $m=20$ & $m=25$ & $m=30$ & $m=35$ & $m=40$ \\
\midrule
\multirow{2}{*}{Regret} 
 & \textbf{CMOSS}$\star$ 
 & \textbf{2096.027} & \textbf{1783.914} & \textbf{1523.988} & \textbf{1376.970} & \textbf{661.014}
 & \textbf{516.490} & \textbf{680.074} & \textbf{993.672} & \textbf{1217.936} & \textbf{1449.846} \\
 & CUCB 
 & 4429.727 & 3446.924 & 3184.634 & 2578.753 & 1131.007
 & 916.646 & 1391.695 & 1952.693 & 2273.459 & 2751.108 \\
\midrule
\multirow{2}{*}{Runtime (sec)} 
 & \textbf{CMOSS}$\star$ 
 & \textbf{8.606} & \textbf{13.616} & \textbf{19.021} & \textbf{23.666} & \textbf{28.344}
 & \textbf{30.834} & \textbf{20.063} & \textbf{19.582} & \textbf{17.276} & \textbf{17.919} \\
 & CUCB 
 & 6.574 & 11.654 & 16.397 & 21.287 & 26.991
 & 24.877 & 16.767 & 16.434 & 14.984 & 14.915 \\
\bottomrule
\end{tabular}}
\label{tab:descending_ablation_k_m}
\end{table}
}

{\rmfamily
\begin{table}[H]
\caption{Ablation study of descending bandit feedback: varying $k$ (with $m=30$ fixed) and varying $m$ (with $k=15$ fixed) on Yelp dataset within the range $[0,0.1]$. \\ $\star$ indicates our proposed method (CMOSS).}
\centering
\renewcommand{\arraystretch}{1.2}
\setlength{\tabcolsep}{1.5mm}
\resizebox{\textwidth}{!}{
\begin{tabular}{c c c c c c c c c c c c}
\toprule
\multirow{2}{*}{\textbf{Metric}} & \multirow{2}{*}{\textbf{Algorithm}} &
\multicolumn{5}{c}{Varying $k$ (Yelp)} &
\multicolumn{5}{c}{Varying $m$ (Yelp)} \\
\cmidrule(lr){3-7} \cmidrule(lr){8-12}
 & & $k=5$ & $k=10$ & $k=15$ & $k=20$ & $k=25$ &
 $m=20$ & $m=25$ & $m=30$ & $m=35$ & $m=40$ \\
\midrule
\multirow{2}{*}{Regret}
 & \textbf{CMOSS}$\star$
   & \textbf{2132.156} & \textbf{1546.039} & \textbf{1257.476}
   & \textbf{856.376} & \textbf{501.748}
   & \textbf{413.413} & \textbf{724.900} & \textbf{1285.257}
   & \textbf{1507.817} & \textbf{2071.933} \\
 & CUCB
   & 3236.333 & 2602.221 & 2150.017
   & 1607.918 & 919.645
   & 674.154 & 1290.947 & 2385.077
   & 2659.719 & 3752.076 \\
\midrule
\multirow{2}{*}{Runtime (sec)}
 & \textbf{CMOSS}$\star$
   & \textbf{12.793} & \textbf{18.497} & \textbf{21.315}
   & \textbf{28.603} & \textbf{25.448}
   & \textbf{31.914} & \textbf{24.223} & \textbf{22.999}
   & \textbf{20.283} & \textbf{21.359} \\
 & CUCB
   & 8.215 & 13.148 & 16.720
   & 19.717 & 22.924
   & 25.212 & 20.810 & 17.956
   & 17.513 & 15.967 \\
\bottomrule
\end{tabular}
}
\label{tab:descending_ablation_k_m_yelp}
\end{table}
}

\subsubsection{Ascending bandit}
As shown in \Cref{tab:ascending}, \textbf{CMOSS} achieves consistently lower cumulative regret than CUCB across all value ranges and datasets under the ascending-order cascading model. Under the $[0,0.1]$ initialization, CMOSS reduces regret by \textbf{49.3\%} on the synthetic dataset and \textbf{45.2\%} on the Yelp dataset for $k=10$. Under the higher-value initialization $[0.3,0.4]$, CMOSS continues to outperform CUCB, achieving regret reductions of \textbf{49.9\%} on the synthetic dataset and \textbf{44.1\%} on Yelp for $k=10$. In terms of runtime, CMOSS incurs moderate overhead relative to CUCB, running approximately \textbf{1.25$\times$} slower on synthetic data and \textbf{0.85$\times$} on Yelp for $k=10$ under $[0,0.1]$ initialization, and \textbf{1.18$\times$} slower on synthetic data and \textbf{1.24$\times$} on Yelp under $[0.3,0.4]$ initialization. Overall, CMOSS provides a favorable trade-off between regret performance and computational cost, consistently improving cumulative regret while maintaining competitive efficiency.

\begin{figure}[H]
    \centering
    \includegraphics[width=\textwidth]{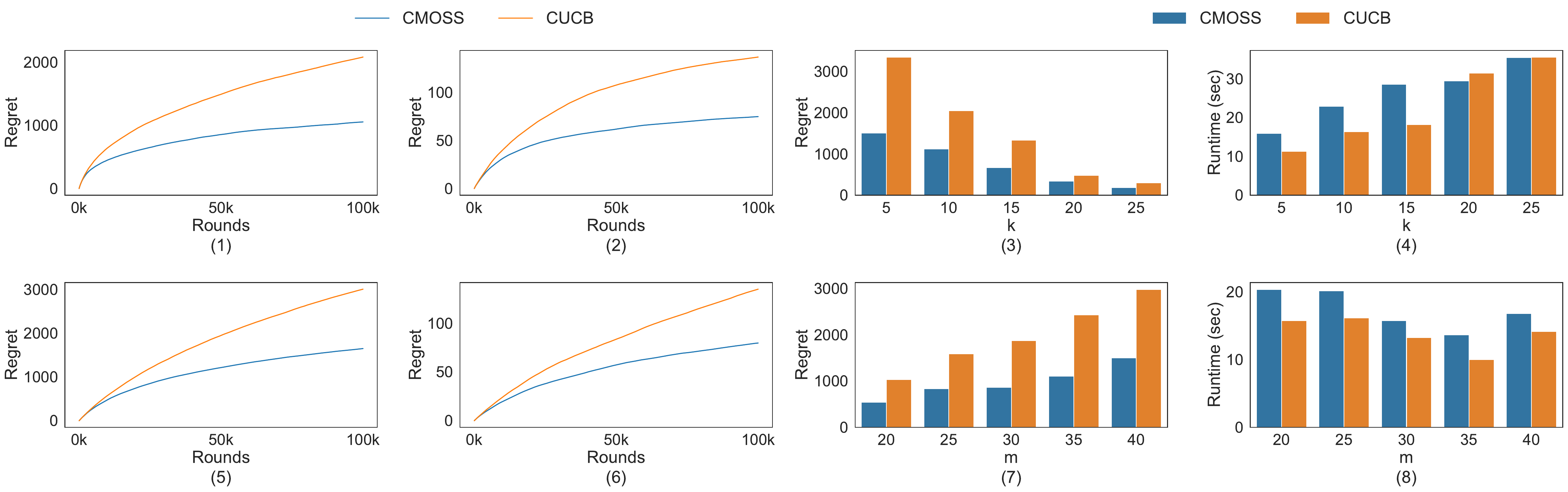}
    \caption{Comparison of CMOSS (blue) with CUCB algorithms under cascading (ascending) feedback.  
Subplots (1)(2)(5)(6) show cumulative regret with fixed \(k=10\), \(m=30\); (3)(4) show ablation studies varying \(k\) (fixed $m=30$), while (7)(8) varying \(m\) (fixed $k=15$). Subplots (1)(2)(3)(4)(7)(8) use synthetic dataset; (5)(6) use Yelp dataset. Initial means of base arms fall within the range \([0, 0.1]\), except for (2)(6), which use the range \([0.3,0.4]\). }
    \label{fig:ascending}
\end{figure}

{\rmfamily
\begin{table}[H]
\caption{Ascending bandit feedback: cumulative regret and runtime (in seconds) of algorithms under two initial base arm value ranges: \([0,0.1]\) and \([0.3,0.4]\), evaluated on both synthetic and Yelp datasets, with fixed $k=10, m=30$.  
Results correspond to subplots (1)(2)(5)(6) in \Cref{fig:ascending}. \\
$\star$ indicates our proposed method (CMOSS).
}
\centering
\renewcommand{\arraystretch}{1.2}
\setlength{\tabcolsep}{1.5mm}
\begin{tabular}{c c c c c c}
\toprule
\multirow{2}{*}{\textbf{Metric}} & \multirow{2}{*}{\textbf{Algorithm}} 
& \multicolumn{2}{c}{Range $[0,0.1]$} 
& \multicolumn{2}{c}{Range $[0.3,0.4]$} \\
\cmidrule(lr){3-4} \cmidrule(lr){5-6}
 & & Synthetic & Yelp & Synthetic & Yelp \\
\midrule
\multirow{2}{*}{Regret} 
 & \textbf{CMOSS}$\star$  & \textbf{1056.272} & \textbf{1651.691} & \textbf{1974.476} & \textbf{2265.129} \\
 & CUCB & 2083.085 & 3010.909 & 3945.550 & 4050.910 \\
\midrule
\multirow{2}{*}{Runtime (sec)} 
 & \textbf{CMOSS}$\star$ & \textbf{16.684} & \textbf{15.371} & \textbf{12.601} & \textbf{14.609} \\
 & CUCB & 13.395 & 18.039 & 10.724 & 11.739 \\
\bottomrule
\end{tabular}
\label{tab:ascending}
\end{table}
}

\Cref{tab:ascending_ablation_k_m} and \Cref{tab:ascending_ablation_k_m_yelp} report ablation results on the synthetic and Yelp datasets under the ascending-order cascading model, where we vary the cardinality $k$ and the number of base arms $m$. On the \textbf{synthetic} dataset, CMOSS consistently outperforms CUCB across all configurations. When varying $k$, CMOSS reduces cumulative regret by \textbf{41.6–52.7\%}, and when increasing $m$, it achieves reductions of \textbf{48.0–53.7\%}. In terms of runtime, CMOSS incurs moderate overhead for varying $k$, with relative costs ranging from \textbf{1.05$\times$} to \textbf{1.31$\times$}, while runtime is more stable when varying $m$, ranging between \textbf{1.13$\times$} and \textbf{1.29$\times$} relative to CUCB. On the \textbf{Yelp} dataset, similar trends hold. CMOSS consistently achieves lower regret, improving over CUCB by \textbf{43.4–45.1\%} when varying $k$ and \textbf{37.5–39.9\%} when varying $m$. Runtime remains competitive: for $k$, the overhead ranges from \textbf{0.92$\times$} to \textbf{1.51$\times$}, and for $m$, it ranges from \textbf{1.33$\times$} to \textbf{1.55$\times$}. Overall, CMOSS demonstrates robust regret reduction and stable runtime behavior, confirming its scalability under the ascending-order feedback model.

{\rmfamily
\begin{table}[H]
\caption{Ablation study of ascending bandit feedback: varying $k$ (with $m=30$ fixed) and varying $m$ (with $k=15$ fixed) on synthetic dataset. Results correspond to subplots (3)(4)(7)(8) in \Cref{fig:ascending}.\\  $\star$ indicates our proposed method (CMOSS).}
\centering
\renewcommand{\arraystretch}{1.2}
\setlength{\tabcolsep}{1.5mm}
\resizebox{\textwidth}{!}{
\begin{tabular}{c c c c c c c c c c c c}
\toprule
\multirow{2}{*}{\textbf{Metric}} & \multirow{2}{*}{\textbf{Algorithm}} & \multicolumn{5}{c}{Varying $k$ (Synthetic)} & \multicolumn{5}{c}{Varying $m$ (Synthetic)} \\
\cmidrule(lr){3-7} \cmidrule(lr){8-12}
  & & $k=5$ & $k=10$ & $k=15$ & $k=20$ & $k=25$ & $m=20$ & $m=25$ & $m=30$ & $m=35$ & $m=40$ \\
\midrule
\multirow{2}{*}{Regret} 
 & \textbf{CMOSS}$\star$
 & \textbf{2096.027} & \textbf{1783.914} & \textbf{1523.988} & \textbf{1376.970} & \textbf{661.014}
 & \textbf{226.889} & \textbf{352.963} & \textbf{485.097} & \textbf{577.284} & \textbf{812.244} \\
 & CUCB
 & 4429.727 & 3446.924 & 3184.634 & 2578.753 & 1131.007
 & 436.651 & 755.159 & 1041.633 & 1283.360 & 1752.840 \\
\midrule
\multirow{2}{*}{Runtime (sec)} 
 & \textbf{CMOSS}$\star$
 & \textbf{8.606} & \textbf{13.616} & \textbf{19.021} & \textbf{23.666} & \textbf{28.344}
 & \textbf{33.199} & \textbf{22.407} & \textbf{21.071} & \textbf{18.665} & \textbf{18.651} \\
 & CUCB
 & 6.574 & 11.654 & 16.397 & 21.287 & 26.991
 & 25.692 & 19.102 & 16.068 & 16.542 & 16.562 \\
\bottomrule
\end{tabular}}
\label{tab:ascending_ablation_k_m}
\end{table}
}

{\rmfamily
\begin{table}[H]
\caption{Ablation study of ascending bandit feedback: varying $k$ (with $m=30$ fixed) and varying $m$ (with $k=15$ fixed) on Yelp dataset. \\ $\star$ indicates our proposed method (CMOSS).}
\centering
\renewcommand{\arraystretch}{1.2}
\setlength{\tabcolsep}{1.5mm}
\resizebox{\textwidth}{!}{
\begin{tabular}{c c c c c c c c c c c c}
\toprule
\multirow{2}{*}{\textbf{Metric}} & \multirow{2}{*}{\textbf{Algorithm}} &
\multicolumn{5}{c}{Varying $k$ (Yelp)} &
\multicolumn{5}{c}{Varying $m$ (Yelp)} \\
\cmidrule(lr){3-7} \cmidrule(lr){8-12}
 & & $k=5$ & $k=10$ & $k=15$ & $k=20$ & $k=25$ &
 $m=20$ & $m=25$ & $m=30$ & $m=35$ & $m=40$ \\
\midrule
\multirow{2}{*}{Regret}
 & \textbf{CMOSS}$\star$
 & \textbf{2173.740} & \textbf{1337.191} & \textbf{816.458} & \textbf{524.540} & \textbf{227.607}
 & \textbf{295.952} & \textbf{653.558} & \textbf{936.492} & \textbf{1185.859} & \textbf{1618.286} \\
 & CUCB
 & 3958.232 & 2224.800 & 1618.416 & 915.447 & 402.428
 & 492.410 & 1059.454 & 1615.740 & 2240.731 & 2589.184 \\
\midrule
\multirow{2}{*}{Runtime (sec)}
 & \textbf{CMOSS}$\star$
 & \textbf{17.769} & \textbf{20.981} & \textbf{24.283} & \textbf{28.160} & \textbf{28.929}
 & \textbf{35.472} & \textbf{24.901} & \textbf{23.824} & \textbf{22.578} & \textbf{22.374} \\
 & CUCB
 & 11.738 & 13.694 & 21.308 & 29.020 & 31.540
 & 26.714 & 21.569 & 19.727 & 18.980 & 14.424 \\
\bottomrule
\end{tabular}
}
\label{tab:ascending_ablation_k_m_yelp}
\end{table}
}
Overall, these results demonstrate that CMOSS consistently achieves superior performance compared to CUCB across both descending- and ascending-order cascading bandit models. It significantly reduces cumulative regret across all datasets and configurations of cardinality \(k\) and number of base arms \(m\), while maintaining competitive runtime efficiency. Notably, CMOSS scales well with increasing \(m\), often running faster or with only moderate overhead compared to CUCB, especially in larger-scale settings. This robust performance across diverse scenarios highlights CMOSS’s practicality and reliability, making it a strong candidate for real-world recommendation and ranking applications where both accuracy and computational cost are critical.

\end{document}